\DeclareMathOperator*{\bE}{{\mathbb{E}}}
\DeclareMathOperator{\Reg}{\operatorname{Reg}}
\DeclareMathOperator{\A}{\mathcal{A}}
\DeclareMathOperator{\Q}{\mathcal{Q}}
\newcommand{\abs}[1]{\left|{#1}\right|}
\DeclareMathOperator{\E}{\mathcal{E}}
\DeclareMathOperator{\cali}{\mathcal{I}}
\DeclareMathOperator{\F}{\mathcal{F}}
\DeclareMathOperator{\G}{\mathcal{G}}
\newcommand{\eps}{\epsilon}
\newcommand{\supp}{\mathtt{supp}}
\newcommand{\Pk}{\text{Proj}_k}
\newcommand{\Lk}{\text{Last}_k}
\theoremstyle{plain}
\newtheorem{theorem}{Theorem}
\newtheorem{fact}[theorem]{Fact}
\newtheorem{claim}[theorem]{Claim}
\newtheorem{lemma}[theorem]{Lemma}
\newtheorem{observation}[theorem]{Observation}
\newtheorem{corollary}[theorem]{Corollary}
\theoremstyle{definition}
\newtheorem{definition}[theorem]{Definition}
\theoremstyle{remark}
\title{Communication-Efficient Collaborative Regret Minimization\\  in Multi-Armed Bandits}
\author{
	Nikolai Karpov,  
	Qin Zhang
}
\begin{document}

\maketitle

\begin{abstract}
    In this paper, we study the collaborative learning model, which concerns the tradeoff between {\em parallelism} and {\em communication overhead} in multi-agent multi-armed bandits.  For regret minimization in multi-armed bandits, we present the first set of tradeoffs between the number of rounds of communication among the agents and the regret of the collaborative learning process.
\end{abstract}

\section{Introduction}
\label{sec:introduction}

One of the biggest challenges with machine learning is {\em scalability}.   In recent years, a series of papers \citep{TZZ19,KZZ20,WHCW20,KZ22,KZ22b} studied bandit problems in the collaborative learning (CL) model, where multiple agents interact with the environment to learn simultaneously and cooperatively.  One of the most expensive resources in the CL model is {\em communication}, which consists of the number of communication steps (round complexity) and the total bits of messages exchanged between agents (bit complexity).  Communication directly contributes to the learning time due to network bandwidth constraints and latency, and it can also lead to significant energy consumption, especially for deep-sea or outer-space exploration tasks. Moreover, when messages are sent using mobile devices, communication can result in significant data usage.  In this paper, we focus on the round complexity in the CL model and consider a basic problem in the bandit theory named {\em regret minimization in multi-armed bandits} (MAB for short). We try to investigate the round-regret tradeoffs for MAB in the CL model.

In the rest of this section, we will first introduce the CL model and the MAB problem.  We then describe our results and place them within the context of the literature. 

\paragraph{Regret Minimization in MAB.}
In the single-agent learning model,  we have one agent and a set of arms $I = \{1, 2, \ldots, N\}$; the arm $i$ is associated with a distribution $\mathcal{D}_i$ with support $[0, 1]$ and (unknown) mean $\mu_i$.  At each time step $t = 1, 2, \ldots, T$, the agent pulls arm ${\pi_t}$ and receives a reward $r_t$ from distribution \(\mathcal{D}_{\pi_t}\).  The expected regret of a $T$-time single-agent algorithm $\A$ on input $I$ is defined to be
\begin{equation}
	\label{eq:regret-1}
	\bE[\Reg(\A(I, T))] = \bE\left[\sum_{t \in [T]} \left(\mu_{\star} - \mu_{\pi_t}\right)\right],
\end{equation}
where $\mu_{\star} \triangleq \max\limits_{i \in [N]} \{ \mu_{i} \}$.\footnote{We use $[n]$ to denote $\{1, 2, \ldots, n\}$.}   Without loss of generality, we assume that the best arm is unique.

\paragraph{The Collaborative Learning Model.}
The CL model was formalized in \citet{TZZ19}. In this model, we have $K$ agents and a set of $N$ arms $I = \{1, 2, \ldots. N\}$, where  arm $i$ has mean $\mu_i$.  Again let $\mu_{\star} \triangleq \max_{i \in [N]} \{ \mu_{i} \}$.
The learning proceeds in rounds.  Within each round, at each time step $t$, each agent $k\ (k \in [K])$ pulls arm ${\pi_t^{(k)}}$ based on its previous pull outcomes and messages received from other agents; the arms $\{{\pi_t^{(k)}}\}_{k \in [K]}$ can be the same or different for different agents.  At the end of each round, the $K$ agents communicate with each other to exchange newly observed information and determine the number of time steps for the next round. The number of time steps for the first round is fixed at the beginning.  See Figure~\ref{fig:CL} for an illustration of the CL model.

\begin{figure}[t] 
	\centering
	\includegraphics[width=0.45\textwidth]{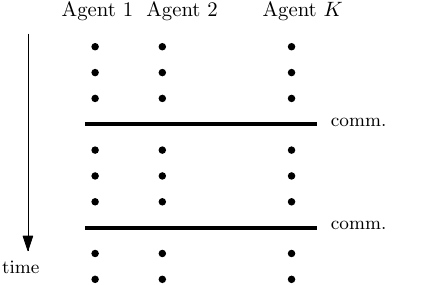} 
	\caption{The collaborative learning model. Each dot represents an arm pull.  The number of rounds of the learning process shown in the figure is $3$.}
	\label{fig:CL} 
\end{figure}

It is worth mentioning that the lengths of rounds are {\em not} required to be determined beforehand in the CL model. Though for most CL algorithms in the literature (including the one proposed in this paper), the round lengths are indeed fixed at the beginning of the algorithms. This relaxation will only make the lower bound proof harder/stronger.

The expected regret of a $T$-time $K$-agent collaborative algorithm $\A_K$ for MAB on input $I$ is defined to be
\begin{equation}
	\label{eq:regret-K}
	\bE\left[\Reg(\A_K(I, T))\right] = \bE\left[\sum_{t \in [T]} \sum_{k \in [K]} \left(\mu_{\star} - \mu_{\pi_t^{(k)}}\right)\right].
\end{equation}

\paragraph{The Batched Learning Model.}  
The CL model is closely related to the batched learning model, which has recently received considerable attention in bandit theory~\citep{PRCS15,JJNZ16,AAAK17,JSXC19,GHRZ19, EKMM19,BXJW19,KZ20,JTX+21}. 

In the batched model, there is one agent interacting with the arms.  The learning proceeds in batches.  The sequence of arm pulls in each batch need to be determined at the beginning of the batch. The goal is for the agent to minimize the regret over a sequence of $T$ pulls using a small number of batches.  

The batched model is motivated by applications in which there is a significant delay in getting back the observations, such as clinical trials~\citep{Thompson33,Robbins52} and crowdsourcing~\citep{KCS08}.  

The following observation connects the CL model and the batched model.
\begin{observation}
\label{ob:reduction}

If there is a $T$-time $R$-batch single-agent algorithm that achieves an expected regret $f(I)$ for any input $I$, then there is a $({T}/{K})$-time $R$-round $K$-agent collaborative algorithm that achieves an expected regret $f(I)$ for any input $I$.
\end{observation}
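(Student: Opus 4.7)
The plan is to simulate the $R$-batch single-agent algorithm $\mathcal{B}$ inside the collaborative model by having the $K$ agents share the work of each batch, with one communication step between consecutive batches. Concretely, I would construct a $K$-agent $R$-round collaborative algorithm $\mathcal{C}$ as follows. At the start of round $r$, each of the $K$ agents holds the same internal state of $\mathcal{B}$ just after batch $r-1$. They each locally compute $\mathcal{B}$'s next move: the batch length $T_r$ (which is set to match the per-agent round length $T_r/K$, so the first round length can be fixed from $\mathcal{B}$'s fixed first batch length), together with the non-adaptive sequence $(\pi_1,\dots,\pi_{T_r})$ of arm pulls prescribed for batch $r$. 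Agent $k$ then performs the pulls at positions $(k-1)T_r/K+1,\dots,kT_r/K$, without needing any within-round adaptivity (which is legal since $\mathcal{B}$'s within-batch pulls are themselves non-adaptive). At the end of round $r$, the agents broadcast the rewards they observed; concatenating these in order reconstructs exactly the observation vector that $\mathcal{B}$ would have seen after batch $r$, so every agent can again agree on $\mathcal{B}$'s state before round $r+1$.

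Next I would check that the regret accounting matches. Over all $R$ rounds, the multiset of arms pulled by $\mathcal{C}$ is identical to the multiset of arms pulled by $\mathcal{B}$ under the same reward realizations, since the distribution over pull sequences is the same. The collaborative regret $\sum_{t\in[T/K]}\sum_{k\in[K]}(\mu_\star-\mu_{\pi_t^{(k)}})$ is then just a relabeling of $\mathcal{B}$'s regret $\sum_{t\in[T]}(\mu_\star-\mu_{\pi_t})$, so $\bE[\Reg(\mathcal{C}(I,T/K))]=\bE[\Reg(\mathcal{B}(I,T))]=\Reg(I)$. The number of rounds of $\mathcal{C}$ is $R$, and the number of communication steps is $R-1$, as required by the CL model.

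The only technical nuisance, which I view as the main obstacle rather than a genuine difficulty, is the divisibility of $T_r$ by $K$: a generic batched algorithm may produce $T_r$ that is not a multiple of $K$. I would resolve this either by (i) restricting attention to batched algorithms whose batch lengths are multiples of $K$ (which one may always enforce by padding the next batch length by at most $K-1$, an additive change that does not affect the regret statement), or (ii) splitting batch $r$ so that $\lfloor T_r/K\rfloor$ agents perform $\lceil T_r/K\rceil$ pulls and the rest perform $\lfloor T_r/K\rfloor$ pulls, taking the per-agent round length to be $\lceil T_r/K\rceil$; the extra "idle" time steps can be filled by pulling an arbitrary arm and ignoring its reward, but since the regret counts all pulls it is cleaner to adopt option (i). In either case the simulation is faithful and the regret is preserved, which completes the proof.
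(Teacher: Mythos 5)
Your simulation is exactly the paper's argument: each batch's non-adaptive pulls are split evenly among the $K$ agents, with one communication step per batch boundary to reassemble the observations, so the regret is preserved by relabeling. The paper states this in one sentence; your divisibility remark is a harmless extra detail the paper leaves implicit.
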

To see this, just note that each round of $z$ (non-adaptive) pulls in a batched algorithm can be evenly distributed to the $K$ agents in a collaborative algorithm so that each agent makes $z/K$ non-adaptive pulls.
Observation~\ref{ob:reduction} allows us to establish a lower bound in the batched model by proving a corresponding lower bound in the CL model, and to design an algorithm for the CL model using an algorithm for the batched model.  

It is important to note that Observation~\ref{ob:reduction} is {\em one-way}; the other direction does {\em not} hold. This is because the CL model is {\bf strictly stronger} than the batched model in the sense that in the CL model, each agent can make {\em adaptive} pulls within each round. While in the batched model, the sequence of pulls are {\em non-adaptive} in each batch.  The requirement to accommodate {\em local agent adaptivity} makes the previous approaches for proving lower bounds in the batched model inapplicable to the CL model.  
As an example, in the previous work~\citep{TZZ19}, it has been shown that for the problem of {\em best arm identification} (BAI) in multi-armed bandits, where the goal is to identify the arm with the highest mean rather than minimizing the regret, $O(\log K)$ rounds is sufficient to achieve almost optimal error probability under a time budget in the adaptive CL model.  On the other hand, the same paper shows that $\Omega(\log N/\log\log N)$ rounds is necessary to achieve almost optimal error probability under a time budget in the batched model.\footnote{In \citet{TZZ19}, the $\Omega(\log N/\log\log N)$ (recall that $N$ is the number of arms) round lower bound was proved for the {\em non-adaptive} CL model, which is equivalent to the batched model.} This shows that the local adaptivity does make the BAI problem more difficult in the CL model when $N \gg K$.

\paragraph{Our Results.}
Let $\star = \arg\max_{i \in [N]} \mu_i$ be the index of the best arm.  Let $\Delta_i \triangleq \mu_\star - \mu_i$, and $\Delta(I) = \min_{i \neq \star} \Delta_i$.   All logarithms have base $2$ unless specified explicitly.  For readability, we use `$\tilde{\ \ }$' to hide some logarithmic factors.  All these factors will be spelled out in the corresponding theorems and corollaries.

The results of this paper include the followings.
\begin{enumerate}
\item Our main result is a lower bound for MAB in the CL model (Theorem~\ref{thm:lb}).  We show that for any $T$-time $K$-agent collaborative algorithm $\A_K$, there is an input $I$ such that if $\A_K$ runs on $I$ using at most $R \le \frac{\log(KT)}{2\log\log\log(KT)}$ rounds, then $\A_K$ incurs an expected regret of $\tilde{\Omega}\left(\min\{K, (KT)^{\frac{1}{R}}\} \cdot \frac{1}{\Delta(I)}\right)$.   

\item Using Observation~\ref{ob:reduction}, our lower bound for MAB in the CL model also gives a lower bound for MAB in the batched model (Corollary~\ref{cor:lb}), which is comparable to the previous best lower bound~\cite{GHRZ19}.

\item We also design an algorithm for batched MAB (Theorem~\ref{thm:ub}). 
Again via Observation~\ref{ob:reduction}, we obtain an algorithm for MAB in the CL model.  Our upper bound matches the lower bound up to logarithmic factors in regret.
\end{enumerate}

We note that there is a single-agent algorithm $\A^{\tt BPR}$ \citep{BPR13} such that for an input of two arms, given a time budget $T$, the algorithm incurs a regret of $\tilde{O}\left( \frac{1}{\Delta(I)}\right)$. Therefore, in the multi-agent setting by asking each of the $K$ agents to run $\A^{\tt BPR}$, we obtain a multi-agent algorithm $\A^{\tt BPR}_K$ that incurs a regret of $\tilde{O}\left(K \cdot \frac{1}{\Delta(I)}\right)$ {\em without} using any communication.  This is why the $\min\{K, (KT)^{\frac{1}{R}}\}$ term in our lower bound is necessary.

On the other hand, we observe that when $R \le \frac{\log(KT)}{\log K}$, any collaborative algorithm $\A_K$, if runs for $T$ time steps, incurs an expected regret of at least $\tilde{\Omega}\left( K \cdot \frac{1}{\Delta(I)}\right)$, which matches the upper bound given by $\A^{\tt BPR}_K$ up to logarithmic factors.  Therefore, Theorem~\ref{thm:lb} indicates that to achieve any super-logarithmic reduction in regret for MAB in the CL model, the agents need to use at least $\frac{\log(KT)}{\log K}$ rounds of communication.

To the best of our knowledge, our proof strategy for the round lower bound in the CL model is new.  The only previous technique for proving round lower bound in the CL model is the {\em generalized round elimination}, which was proposed in \citet{TZZ19} for the problem of best arm identification.  However, we found it difficult to use this technique for proving a regret-round tradeoff, primarily because of  the different nature of hard input instances between best arm identification and regret minimization. Specifically,  \citet{TZZ19} exploited a pyramid-type construction on arms for best arm identification, while our hard inputs for regret minimization only involve two arms. If we directly apply round elimination to our hard inputs, only the best arm would survive after one elimination step. Consequently, the maximum round lower bound we can prove using round elimination is only two.

\section{Related Work}
\label{sec:related}

\paragraph{Work in the Collaborative Learning Model.}
To the best of our knowledge, the study of the CL model began from \citet{HKK+13}, in which the authors considered the problem of best arm identification (BAI) in MAB.  However, \citet{HKK+13} only considered a special case for the lower bound, and the CL model was {\em not} formally defined in their paper.

The CL model that we use in this paper was introduced by \citet{TZZ19}, in which almost tight round-time tradeoff was given for BAI in MAB.\footnote{In \citet{TZZ19}, the time cost was presented as {\em speedup}, defined as the ratio between the running time of the collaborative algorithm and that of the best centralized algorithm.}   \citet{KZZ20} extended this line of work to the top-$m$ arm identification in MAB.   The work of \citet{KZ22} investigated the bit complexity of BAI in the CL model.  \citet{WHCW20} studied regret minimization in MAB in the same model, but their primary focus is the bit complexity.  \citet{DSV+23} investigated neural contextual bandits; their focus was only on the upper bounds.

Several recent papers~\citep{SS21,SSY21,KZ22b} studied problems in MAB in the {\em non-IID} CL model, where agents interact with possibly different environments.  More specifically, \citet{SS21,SSY21} studied regret minimization with a focus on the bit complexity, but the bit cost in their model is integrated into the regret formulation.  \citet{KZ22b} gave almost tight round-time tradeoff for BAI in the non-IID CL model. \citet{RVK21} gave collaborative algorithms for non-IID BAI and regret minimization in MAB in a similar setting, but their algorithmic results only consider the fixed-confidence setting, and they did not give any lower bound on the round-regret tradeoffs.

\paragraph{Work in the Batched Learning Model.}  Batched algorithms for bandit problems have attracted significant attention in the past decade. As discussed previously, \citet{GHRZ19,EKMM19} studied regret minimization in MAB mentioned.  An earlier work \citep{PRCS15} studied the same problem on two arms.  \citet{JTX+21} considered asymptotic regret in MAB in the batched model. Several recent papers studied batched regret minimization in MAB using Thompson sampling~\citep{KO21,KMS21,KZ21}.  Another series of works \citep{JJNZ16,AAAK17,JSXC19} studied  batched BAI in MAB.  

\paragraph{Other Work in Multi-Agent Bandit Learning.}
There are many other papers investigating multi-agent bandit learning, but they do not focus on the round complexity of the learning process. A series of papers \citep{SBH+13,LSL16,LSL18} considered MAB problems in the peer-to-peer (P2P) computing models such that at each time step, agents can only communicate with their neighbors in the P2P network.  Several papers \citep{LZ10,RSS16,BL18,BB20} considered the collision model, in which if multiple agents try to pull the same arm at a particular time step, then their rewards will be reduced due to collision.  

There is a line of research that studies the bit complexity of the messages transmitted between the agents~\citep{ML21, CSGS20, WYCLHTL23, WPAJR20,WHCW20,HWYS21, HWMG22,LWWW22}.  Some recent work has extended this line of research to related models such as the Markov Decision Processes~\citep{DP21,MHWG23}.

\section{The Lower Bound}
\label{sec:lb}

In this section, we give the following theorem, which is the main result of this paper.  

\begin{theorem}
	\label{thm:lb}
	For any $R$ such that $1 \le R \le \frac{\log(KT)}{2\log\log\log(KT)}$, and for any $R$-round $T$-time $K$-agent collaborative algorithm $\A_K$ for MAB, there is an input $I$ such that $\A_K$ incurs an expected regret of $\Omega\left(\frac{1}{{\log(KT)\log\log\log(KT)}} \cdot \min\left\{K, (KT)^{\frac{1}{R}}\right\}\cdot \frac{1}{\Delta(I)}\right)$ on input $I$.
\end{theorem}

By Observation~\ref{ob:reduction}, we have the following corollary (we chose a value $K$ such that $K =  (KT)^{\frac{1}{R}}$ in Theorem~\ref{thm:lb}, and note that a time budget $T/K$ in the CL model corresponds to a time budget $T$ in the batched model).
\begin{corollary}
	\label{cor:lb}
	For any $R$ such that $1 \le R \le \frac{\log T}{2\log\log\log T}$, and for any $R$-round $T$-time batch algorithm $\A$ for MAB, there is an input $I$ such that $\A$ incurs an expected regret of $\Omega\left(\frac{1}{{\log T \log\log\log T}} \cdot T^{\frac{1}{R}} \cdot \frac{1}{\Delta(I)}\right)$ on input $I$.
\end{corollary}

This result is comparable with the lower bound result for (adaptive grid) batched algorithms in \citet{GHRZ19}. In particular, both results show that $\Omega(\log T/\log \log T)$ rounds is necessary   to achieve the optimal regret $O\left(\log T \cdot \frac{1}{\Delta(I)}\right)$. 

In the rest of this section we prove Theorem~\ref{thm:lb}.

\subsection{The Setup}
We start by introducing some concepts and notations.

\paragraph{Notations.}  
We list in Table~\ref{tab:notation} a set of key notations that we will use throughout this paper.   Readers can always come back to this table when encounter an unfamiliar notation.

\begin{table}[t]
	\centering
		\begin{tabular}{|p{.06\textwidth}| p{.37\textwidth}|}
			\hline
			Notation & Definition\\
			\hline
			$N$ &  number of arms \\
			\hline
			$K$ &  number of agents \\
			\hline
			$R$ &  number of rounds \\
			\hline
			$T$ &  time horizon \\
			\hline
			$\eps$, $\lambda$, $\beta$ & fixed constants: $\eps = 0.1$,  $\lambda =10^{-6}$, and $\beta = 4$\\
			\hline
			$\alpha$ & $\alpha \triangleq {\log L}/{(2\lambda)}$\\
			\hline
			$L$ & $L = \frac{\log(4KT)}{4}$ is the number of pairs of hard inputs \\
			\hline
			$\Delta_\ell$ & $\Delta_\ell = 2/\beta^\ell$ is the mean gap of two arms in the level $\ell$ hard inputs \\
			\hline
			$\gamma$ & pull transcript; a sequence of (arm pull index, reward) pairs\\
			\hline
		$r(\gamma)$ & see Definition~\ref{def:r-and-ell}; intuitively, it is the index of a ``big" round under transcript $\gamma$ \\
			\hline
			$\ell(\gamma)$ & see Definition~\ref{def:r-and-ell}; it is roughly the logarithm of the time step of the beginning of the $r(\gamma)$-th round \\
			\hline
			$\tau(\gamma, \ell)$ & see Definition~\ref{def:tau}; can be seen as a mapping from $\ell(\gamma)$ back to the round index \\
			\hline
$\ell^*$ &   defined in Inequality~(\ref{eq:h-4}) \\
			\hline 
		\end{tabular}
		\caption{Summary of Notations}
			\label{tab:notation}
\end{table}

We will use $R$ to denote the number of rounds used by the $K$-agent collaborative algorithm $\A_K$.  

For a time horizon $T$, we will create $L = \frac{\log(4KT)}{4}$ pairs of hard inputs, and focus on $R$ in the range
\begin{equation}
	\label{eq:R-L}
	\frac{4L}{\log K}  \le R \le \frac{2L}{\log\log L} .
\end{equation}
Note that in the case when $R < \frac{4L}{\log K}$, the regret will certainly be {\em lower} bounded by the case when $R  = \frac{4L}{\log K}$, in which case $(KT)^{\frac{1}{R}}$ becomes 
$$(KT)^{\frac{\log K}{4L}} = 2^{\log(KT) \cdot \frac{\log K}{\log(4KT)}} = \Theta(K).$$
This is why there is a $\min\{K, (KT)^{\frac{1}{R}}\}$ term inside the regret in Theorem~\ref{thm:lb}.  As mentioned in ``our results" in the introduction, this $\min$ operation is also necessary to be there.

We will use the following constants in the proof:  $\eps = 10^{-1}$, $\lambda =10^{-6}$, and $\beta = 4$.  We will use the  notations instead of the actual constants in most places of this section for the sake of readability.

\paragraph{Pull Transcript.}  Let $\gamma = ((j_1, o_1), \ldots, (j_n, o_n))$ be a sequence of pulls and reward outcomes on an input $I$ for MAB, where $j_t$ is the index of the arm in $I$ being pulled at the $t$-th time step and $o_t$ is the corresponding reward.  We call $\gamma$ the {\em transcript} of a sequence of arm pulls, and use $\abs{\gamma} = n$ to denote the length of $\gamma$ (i.e., the number of $(j_t, o_t)$ pairs in $\gamma$).  For convenience, we use $j(\gamma) = (j_1, \ldots, j_n)$ to denote the sequence of arm indices and $o(\gamma)$ to denote the corresponding sequence of rewards. 

For a sequence of arm indices $j(\gamma)$, let $\Theta_I(j(\gamma))$ be the sequence of (random) rewards by pulling the arms of $I$ according to $j(\gamma)$.  We define 
\begin{equation}
	\label{eq:g-def}
	g_I(\gamma) \triangleq \Pr[\Theta_I(j(\gamma)) = o(\gamma)],
\end{equation}
which is the probability of observing the reward sequence $o(\gamma)$ by pulling the arms in input $I$ following the index sequence $j(\gamma)$.

For a single-agent algorithm $\A$ for MAB, an input $I$ and a time horizon $n$, we use $\Gamma \sim \A(I, n)$ to denote a random transcript generated by running $\A$ on input $I$ for $n$ time steps. For a $K$-agent collaborative algorithm $\A_K$, we write $\Gamma \sim \A_K(I, n)$ as the {\em round-robin} concatenation of the $K$ transcripts generated by the $K$ agents on input $I$ for $n$ time steps.  That is,
$$
\Gamma = \left\{(J_1^{(1)}, O_1^{(1)}),  \ldots, (J_1^{(K)}, O_1^{(K)}), \ldots, (J_n^{(K)}, O_n^{(K)})\right\},
$$
where $(J_t^{(k)}, O_t^{(k)})$ is the (arm index, reward) pair of the pull of agent $k$ at time $t$. We use capital letters $J_t^{(k)}$ and $O_t^{(k)}$  since they are random variables depending on the previous pulls and outcomes.

\subsection{The Hard Inputs}
\label{sec:hard-dist}

We begin by introducing the set of hard inputs.

For each $\ell \in \{1,  \ldots, L\}$ and each $\sigma \in \{+1, -1\}$, let
$I_\ell^\sigma$ be an input on two Bernoulli arms (i.e., the reward is either $0$ or $1$ on each pull), where the first arm has mean $\frac{1}{2} + \frac{\sigma}{\beta^\ell}$ and the second arm has mean $\frac{1}{2} - \frac{\sigma}{\beta^\ell}$. 

For the convenience of writing, we will abbreviate $I_\ell^{+1}$ and $I_\ell^{-1}$ to  $I_\ell^+$ and $I_\ell^-$ respectively.

For $\ell \in [L]$, let $\Delta_\ell  = {2}/{\beta^\ell}$ be the mean gap between the two arms in the inputs $I_\ell^{+}$ (or $I_\ell^{-}$).  

We define the set of hard inputs to be
\begin{equation*}
	\cali = \{I^{+}_1, I^{-}_1, \dotsc, I^{+}_L, I^{-}_L\}\,.
\end{equation*}
Let $\cali_\ell = \{I^{+}_\ell, I^{-}_\ell, \dotsc, I^{+}_L, I^{-}_L\}$ denote a suffix of $\cali$ starting from index $\ell$.

The set of hard inputs $\cali$ have some nice properties which we will use in the lower bound proof. Due to the space constraints, we leave them to~\Cref{sec:input-property}.

\subsection{Indistinguishable Input Pairs}
\label{sec:typical}

We introduce the following event defined on a transcript $\gamma$.
\begin{definition} {\bf Event $\E(\gamma)$:}
	\label{def:E}
	For any $\ell \in [L]$ such that $\frac{\lambda  \beta^{2\ell}}{\log L}~\ge~\abs{\gamma}$, and for any pair of inputs ${A, B \in \cali_\ell}$, we have
	\begin{equation*}
		\label{eq:typical}
		\ln\frac{g_A(\gamma)}{g_B(\gamma)} \le  2\eps\ . 
	\end{equation*}
\end{definition}

Intuitively, it says that when the length of transcript $\gamma$ is smaller than $\frac{\lambda  \beta^{2\ell}}{\log L}$, the probabilities of producing $\gamma$ under all inputs in $\cali_\ell$ are similar. We will often abbreviate $\E(\gamma)$ to $\E$ when $\gamma$ is clear from the context.  

The following lemma states that for a random transcript $\Gamma$ generated by running a single-agent algorithm on any input in $\cali$, $\E(\Gamma)$ holds with high probability.  The technical proof can be found in \Cref{app:proof-lem-event-E}.

\begin{lemma}
	\label{lem:event-E}
	For any single-agent algorithm $\A$ for MAB, any $I \in \cali$, and any $n > 0$, let $\Gamma \sim \A(I, n)$ denote a random transcript $\Gamma$ generated by running $\A$ on input $I$ for $n$ time steps. It holds that $$\Pr_{\Gamma \sim \A(I, n)}[\E(\Gamma)] \ge 1 - {1}/{L^6}.$$
\end{lemma}

The next lemma shows that {\em short} transcripts generated by a single-agent algorithm on two inputs in $\cali_\ell$ are statistically indistinguishable.  Its proof can be found in \Cref{app:proof-lem-indistinguishable}.

\begin{lemma}
	\label{lem:indistinguishable}
	Let $\A$ be any single-agent algorithm for MAB. For a transcript $\gamma$, let $\G(\gamma)$ be any event determined by $\gamma$.  
	For any $\ell \in [L]$, any pair of inputs $A, B \in \cali_\ell$, and any $n \le \frac{\lambda \beta^{2\ell}}{ \log L}$, we have
	\begin{equation*}
		\label{eq:similar-1}
	 \Pr_{\Gamma \sim \A(A, n)}[\G(\Gamma) \wedge \E(\Gamma)] \le 		e^{2\eps} \Pr_{\Gamma \sim \A(B, n)}[\G(\Gamma) \wedge \E(\Gamma)]\ .
	\end{equation*}
\end{lemma}

\subsection{The Lower Bound Proof}
\label{sec:lb-proof}

Now we are ready to give the proof of Theorem~\ref{thm:lb}. 

\paragraph{Intuition.} 
The high level intuition is that if the number of rounds of the CL algorithm is small, then for some pair of inputs \((I^+_{\ell^*}, I^-_{\ell^*})\) in the set of hard inputs $\cali$, we have (1) the algorithm will make many pulls in the $\ell^*$-th round, and (2) the information collected from the pull transcript and previous communication {\em at each local agent} is not enough to distinguish \(I^+_{\ell^*}\) from \(I^-_{\ell^*}\). The second item implies that all sequences of pulls on the pair of inputs \((I^+_{\ell^*}, I^-_{\ell^*})\) are approximately equally likely, which, together with the first item, implies that the regret of the algorithm should be large on at least one of these two inputs. 

\paragraph{Identifying A Critical Pair of Inputs.}
We start by identifying the pair \((I^+_{\ell^*}, I^-_{\ell^*})\).

Observe that by our choices of $L$ and $\beta$, it holds that 
\begin{equation}
\label{eq:KT-L}
T = 1/(K\Delta_L^2) = \beta^{2L}/(4K).
\end{equation}   

Let $\A_K$ be a $R$-round collaborative algorithm.
Let $\gamma$ be any transcript produced by $\A_K$.  Let $t_r \triangleq t_r(\gamma)\ (r = 1, \ldots, R)$ be the time step at the end of the $r$-th round. We thus have $t_R = T$.  For convenience, we define $t_0 = 1/K$.  Note that $t_1, \ldots, t_{R-1}$ are determined by $\gamma$, and $t_0$ and $t_R$ are two fixed values.

We have the following simple fact on the ratio of finishing times of two consecutive rounds.
\begin{fact}
	\label{fact:time-partition}
	For any $T > 0$, $R > 0$, and any transcript $\gamma$, there is a $r \in [R]$ such that 
	$
	\frac{t_r}{t_{r-1}} \ge (KT)^{\frac{1}{R}}.
	$
\end{fact}

We define the following event $\F_r(\gamma)$ for $r = 1, \ldots, R$.
\begin{definition} {\bf Event $\F_r(\gamma)$:}
	\label{def:F}
	For any $i < r$, it holds that $t_i/t_{i - 1} < (KT)^\frac{1}{R}$; and for $i = r$, we have $t_r/t_{r - 1} \ge (KT)^\frac{1}{R}$.
\end{definition}
It is clear that $\F_1(\gamma), \ldots, \F_R(\gamma)$ are disjunctive and they together partition the probability space.

For convenience of writing, let $\alpha \triangleq \frac{\log L}{2\lambda}$.  We first introduce two notations $r(\gamma)$ and $\ell(\gamma)$; intuitively, the former is the index of a ``big" round under transcript $\gamma$, and the latter is roughly the logarithm of the time step of the beginning of the $r(\gamma)$-th round.

\begin{definition}[$r(\gamma)$ {\bf and} $\ell(\gamma)$]
	\label{def:r-and-ell}
	For a transcript $\gamma$, let $r = r(\gamma)$ be the round index such that $\F_r(\gamma)$ holds. And let $\ell(\gamma)$ be the integer such that
	\begin{equation}
		\label{eq:h-1}
		\frac{\beta^{2(\ell(\gamma)-1)}}{\alpha K } \le t_{{r(\gamma)}-1} < \frac{\beta^{2\ell(\gamma)}}{\alpha K }.
	\end{equation}
\end{definition}

The next claim shows that the value of $\ell(\gamma)$ will not be larger than $L$.  Its proof can be found in \Cref{app:proof-cla-ell-star}.

\begin{claim}
		For any $\gamma$, it holds that $1 \le \ell(\gamma) \le L$.
		\label{cla:ell-star}
\end{claim}

By the definition of $\F_r(\gamma)$, we have
\begin{equation}
	t_{r(\gamma)} \ge (KT)^{\frac{1}{R}} \cdot t_{r(\gamma)-1} = \left(\frac{\beta^{2L}}{4}\right)^{\frac{1}{R}} t_{r(\gamma)-1}\ .
	\label{eq:h-3}
\end{equation}

Let $m_{r(\gamma)} = t_{r(\gamma)} - t_{r(\gamma)-1}$ be the length of the $r(\gamma)$-th round. By (\ref{eq:h-1}) and (\ref{eq:h-3}), we have
\begin{eqnarray}
m_{r(\gamma)} \ge  \left(\left(\frac{\beta^{2L}}{4}\right)^{\frac{1}{R}}-1\right) \frac{\beta^{2(\ell(\gamma)-1)}}{\alpha K } .  \label{eq:h-5}
\end{eqnarray}

Now, consider a particular $\ell^*$ such that
\begin{equation}
		\label{eq:h-4}
	\Pr_{\Gamma \sim \A_K(I_L^{+}, T)}[\ell(\Gamma) = \ell^*] \ge \frac{1}{L} \ .
\end{equation}
Such an $\ell^*$ must exist, since each transcript $\Gamma = \gamma$ corresponds to a unique $r(\gamma)$ and consequently a unique $\ell(\gamma)$. And by Claim~\ref{cla:ell-star}, $\ell(\gamma) \le L$ always holds.

Our goal is show that the expected regret of $\A_K$ is high on either the input $I_{\ell^*}^+$ or the input $I_{\ell^*}^-$.  We call $(I_{\ell^*}^+, I_{\ell^*}^-)$ the {\em critical input pair} for $\A_K$.

\paragraph{Projection of A Collaborative Algorithm.}  To facilitate the regret analysis on the critical pair of inputs, we would like to introduce a concept termed as  {\em projection of a collaborative algorithm on a single agent}.  

We first introduce a notation $\tau(\gamma, \ell)$, which can be seen as a mapping from (the logarithm of) time step $\ell(\gamma)$ {\em back} to the round index. 
\begin{definition}[$\tau(\gamma, \ell)$]
	\label{def:tau}
	Let $\gamma$ be an arbitrary transcript generated by running $\A_K$ on $I$ for $T$ time steps. Let $\tau(\gamma, \ell)$ be the round index such that
	\begin{equation}
		\label{eq:tau}
		\frac{\beta^{2(\ell-1)}}{\alpha K } \le t_{{\tau(\gamma,\ell)}-1} < \frac{\beta^{2\ell}}{\alpha K }.
	\end{equation}
\end{definition}
By the Definition~\ref{def:r-and-ell} and Definition~\ref{def:tau}, it is not difficult to check that $r(\gamma) = \tau(\gamma, \ell(\gamma))$.

Let $\A_K$ be a collaborative algorithm.  For any $k \in [K]$, we use $\Pk^{\A_K}(I, \ell)$ to denote a {\em single-agent} algorithm that simulates $\A_K$.

And let 
\begin{equation}
	\label{eq:zeta}
	\zeta_\ell = \frac{\beta^{2\ell}}{\alpha} \cdot \frac{\beta^{2(\frac{L}{R}-1)}}{8K}.
\end{equation}

$\Pk^{\A_K}$ simulates $\A_K$ as follows:
In the first $(\tau(\gamma, \ell)-1)$ rounds, at each time step $t$, if agents $1, \ldots, K$ pull arms $a_t^{(1)}, \ldots, a_t^{(K)}$ in $I$ respectively under $\A_K$, then $\Pk^{\A_K}$ pulls arms $a_t^{(1)}, \ldots, a_t^{(K)}$ in $I$ in order.  In the $\tau(\gamma, \ell)$-th round, at each time step $t$ when $t \le \zeta_\ell + t_{\tau(\gamma, \ell)-1}$, if agent $k$ pulls arm $a_t^{(k)}$ in $I$ under $\A_K$, then $\Pk^{\A_K}$ also pulls arm $a_t^{(k)}$ in $I$.  

For a transcript $\gamma$ generated by running $\A_K$ and each $k \in [K]$, we introduce two concepts:
\begin{definition}[$\Pk(\gamma, \ell)$]
 Let $\Pk(\gamma, \ell)$ be the sequence of $(j_t, o_t)$ pairs in $\gamma$ generated by the $K$ agents in the round-robin fashion in the first $(\tau(\gamma, \ell) - 1)$ rounds, followed by  the first $\zeta_\ell$ of $(j_t, o_t)$ pairs in the $\tau(\gamma, \ell)$-th round (or until the end of the $\tau(\gamma, \ell)$-th round) in $\gamma$ generated by agent $k$.   
\end{definition}

$\Pk(\gamma, \ell)$ connects a pull transcript produced by a $K$-agent algorithm {\em in the eye of the $k$-th agent} at the time of the $\zeta_\ell$-th time step in the $\tau(\gamma, \ell)$-th round (or until the end of the $\tau(\gamma, \ell)$-th round)  with that produced  by a single-agent algorithm.
\medskip

\begin{definition}[$\Lk(\gamma, \ell)$]
Let $\Lk(\gamma, \ell)$ be the sequence of the first $\zeta_\ell$ of $(j_t, o_t)$ pairs in the $\tau(\gamma, \ell)$-th round (or until the end of the $\tau(\gamma, \ell)$-th round)  in $\gamma$ generated by agent $k$.  
\end{definition}

$\Lk(\gamma, \ell)$ can be seen as a suffix of $\Pk(\gamma, \ell)$ that is only observed locally at the $k$-agent in the $\tau(\gamma, \ell)$-th round.
\medskip

\paragraph{Large Regret on the Critical Input Pair.}
Now we are ready to lower bound the regret.
Let $\A_K$ be any $K$-agent collaborative algorithm, and $\ell^*$ satisfying Inequality (\ref{eq:h-4}).   We define the following event for a transcript $\gamma$.
\begin{definition}{\bf Event $\Q(\gamma)$:}
	\label{def:Q}
	 $\ell(\gamma) = \ell^*$.
\end{definition}
Inequality (\ref{eq:h-4}) implies that
\begin{equation}
	\label{eq:k-1}
	\Pr_{\Gamma \sim \A_K(I_L^{+}, T)}[\Q(\Gamma)] \ge \frac{1}{L} \ .
\end{equation}
Intuitively, Event $\Q(\gamma)$ says that the ``big" round under transcript $\gamma$ coincides to at least a $1/L$ fraction of transcripts produced by $\A_K$ on a particular input $I_L^{+}$. 

Let $\Gamma_L \sim \A_K(I_L^{+}, T)$.  Let
\[\Upsilon = \{\gamma \in \supp(\Gamma_L)\ |\ \Q(\gamma)\}, \] and for any $k \in [K]$, 
\begin{equation}
	\label{eq:Upsilon}
  \Upsilon_k(\ell^*) = \{\Pk(\gamma, \ell^*)\ |\ \gamma \in \Upsilon\}.
\end{equation}
We will try to show that $I_{\ell^*}^+, I_{\ell^*}^-$ are indistinguishable w.r.t.\ transcripts in $\Upsilon_k(\ell^*)$, and we will use the special input $I_L^+$ as a bridge.
Specifically, we show for each transcript $\gamma \in \Upsilon_k(\ell^*)$, the probability of producing $\gamma$ when the input instance is $I_{\ell^*}^+$ is close to the probability of producing $\gamma$ when the input instance is $I_{\ell^*}^-$\ .

Before doing this, we first upper bound the length of transcripts in $\Upsilon_k(\ell^*)$.
By (\ref{eq:zeta}) and (\ref{eq:h-5}), we have
\begin{equation}
	\label{eq:safe}
	\zeta_{\ell^*} \le \left(\left(\frac{\beta^{2L}}{4}\right)^{\frac{1}{R}}-1\right) \frac{\beta^{2(\ell^*-1)}}{\alpha K } \le m_{\tau(\gamma, \ell^*)} .
\end{equation}

Consequently, for any $\gamma \in \Upsilon_k(\ell^*)$,
\begin{eqnarray}
	\abs{\gamma} &=& K \cdot t_{\tau(\gamma, \ell^*)-1} + \zeta_{\ell^*}  \nonumber
	\\ &\le& K \cdot \frac{\beta^{2\ell^*}}{\alpha K } + \frac{\beta^{2\ell^*}}{\alpha} \cdot \frac{ \beta^{2(\frac{L}{R}-1)}}{8K} \nonumber  \\
	&\le& \frac{\lambda  \beta^{2\ell^*}}{ \log L} \ ,  \label{eq:k-2}
\end{eqnarray}
where the last inequality holds because $\beta^{2(\frac{L}{R}-1)} \le 8K$ by the first inequality in (\ref{eq:R-L}).

The following two claims exhibit properties of transcripts in $\Upsilon_k(\ell^*)$.  The first claim states that the probability of a random transcript being in $\Upsilon_k(\ell^*)$ is significant.   Its proof makes use of Lemma~\ref{lem:event-E} and Lemma~\ref{lem:indistinguishable}.

\begin{claim}
	\label{cla:large-mass}
	For any $I \in \{I_{\ell^*}^+, I_{\ell^*}^-\}$ and any $k \in [K]$, we have 
	$$
	\Pr_{\Gamma \sim {\A_K}(I, T)}[\Pk(\Gamma, \ell^*) \in \Upsilon_k(\ell^*)\ \wedge\ \E(\Pk(\Gamma, \ell^*))] \ge \frac{e^{-2\eps}}{2L}\ .$$
\end{claim}

The next claim states that it is difficult to use a transcript in $\Upsilon_k(\ell^*)$ to differentiate inputs $I_{\ell^*}^+$ (or $I_{\ell^*}^-$) from $I_L^+$.  Its proof makes use of Lemma~\ref{lem:indistinguishable}.  

\begin{claim}
	\label{cla:similar}
	For any $I \in \{I_{\ell^*}^+, I_{\ell^*}^-\}$ and any $k \in [K]$, for any $\gamma \in \Upsilon_k(\ell^*)$ such that $\E(\gamma)$ holds, we have 
\begin{eqnarray}
	&&\Pr_{\Gamma \sim {\A_K}(I, T)}[\Pk(\Gamma, \ell^*) = \gamma] \nonumber \\
&=& c_\eps \Pr_{\Gamma \sim \A_K(I_L^+, T)}[\Pk(\Gamma, \ell^*) = \gamma] \nonumber
\end{eqnarray}
	for some $c_\eps \in [e^{-2\eps}, e^{2\eps}]$.
\end{claim}

Recall that Lemma~\ref{lem:event-E} and Lemma~\ref{lem:indistinguishable} concern single-agent algorithms.  We use the following relation between a $K$-agent algorithm $\A_K$ and a single-agent algorithm $\A$ to connect Claim~\ref{cla:large-mass} and Claim~\ref{cla:similar} with Lemma~\ref{lem:event-E} and Lemma~\ref{lem:indistinguishable}:
\begin{equation*}
	\Pr_{\Gamma \sim \A_K(I_L^{+}, T)}[\Q(\Gamma)] = \Pr_{\Gamma \sim \Pk^{\A_K}(I_L^+, \ell^*)}[\Gamma \in \Upsilon_k(\ell^*)]  .
\end{equation*}
We leave the proofs of Claim~\ref{cla:large-mass} and Claim~\ref{cla:similar} to \Cref{app:cla-large-mass} and \Cref{app:cla-similar}, respectively.

\medskip
We now try to prove Theorem~\ref{thm:lb}.

Let $\Gamma^+ \sim \A_k(I_{\ell^*}^+, T)$, and $\Gamma^- \sim \A_k(I_{\ell^*}^-, T)$. By Claim~\ref{cla:similar} and Claim~\ref{cla:large-mass}, we know that 
\begin{eqnarray}
	&&\sum_{\gamma \in \Upsilon_k(\ell^*)} \min \left\{ \begin{aligned} &\Pr[\Pk(\Gamma^+, \ell^*) = \gamma], \\ &\Pr[\Pk(\Gamma^-, \ell^*) = \gamma] \end{aligned} \right\} \nonumber \\
	&\ge&  \frac{e^{-2\eps}}{2L} \cdot (c_\eps)^2 \ge  \frac{e^{-8\eps}}{2L} . \label{eq:i-1}
\end{eqnarray}

For an input $I$ and transcript $\gamma = ((j_1, o_1), \ldots, (j_{\abs{\gamma}}, o_{\abs{\gamma}}))$, let $\Reg(I, \gamma)$ denote the regret of pulling the arm sequence $j(\gamma)$ on the input $I$, that is, 
$
	\Reg(I, \gamma) = \sum_{t = 1, \ldots, \abs{\gamma}} (\mu_* - \mu_{j_t}).
$

For any transcript $\gamma \in \Upsilon_k(\ell^*)$ and any $k \in [K]$,  we consider the regret $U_k = \Reg\left(I_{\ell^*}^+, \Lk(\gamma, \ell^*)\right)$ and $V_k = \Reg\left(I_{\ell^*}^-, \Lk(\gamma, \ell^*)\right)$.  Due to our constructions of $I_{\ell^*}^+$ and $I_{\ell^*}^-$, we have for any $k \in [K]$,
\begin{equation}
\label{eq:z}
	U_k + V_k \ge  \Delta_{\ell^*} \cdot \zeta_{\ell^*}\ .
\end{equation}

Since for $k = 1, \ldots, K$, $\Lk(\gamma, \ell^*)$ are disjoint,  we have for any $\gamma \in \Upsilon$,
\begin{eqnarray}
	&& \Reg(I_{\ell^*}^+, \gamma) + \Reg(I_{\ell^*}^-, \gamma) \nonumber \\
	 &\ge&  \sum_{k \in [K]} (U_k + V_k)  \nonumber \\
	 &\stackrel{(\ref{eq:z})}{\ge}&  K \Delta_{\ell^*} \cdot \zeta_{\ell^*}  \nonumber \\
	 &=&  K \Delta_{\ell^*} \cdot   \frac{ \beta^{2\ell^*}}{\alpha } \cdot \frac{\beta^{2(\frac{L}{R}-1)}}{8 K} \nonumber \\
	&=& \frac{\beta^{2(\frac{L}{R}-1)} }{2\alpha} \cdot \frac{1}{\Delta_{\ell^*}} . \label{eq:j-1}
\end{eqnarray}

By (\ref{eq:i-1}) and (\ref{eq:j-1}), we have that 
\begin{eqnarray}
	 && \max\left\{\bE\left[\Reg(\A_K(I_{\ell^*}^+, T))\right], \bE\left[\Reg(\A_K(I_{\ell^*}^-, T))\right]\right\} \nonumber
	\\ &&\ge \frac{1}{2} \cdot \frac{e^{-8\eps}}{2L} \cdot \frac{\beta^{2(\frac{L}{R}-1)} }{2\alpha} \cdot \frac{1}{\Delta_{\ell^*}} \label{eq:reg} \nonumber \\
	&&= \Omega\left(\frac{\beta^{\frac{2L}{R}}}{L \log L} \cdot \frac{1}{\Delta_{\ell^*}}\right). \nonumber 
\end{eqnarray}
This concludes the proof of Theorem~\ref{thm:lb}.

\begin{algorithm}[t]
	\caption{\textsc{BatchedMAB}\((I, \lambda, T)\)}
	\label{alg:main}
	\begin{algorithmic}[1]
	\STATE Initialize a set of active arms \(I_0 \leftarrow I\)
	\STATE Set \(T_0 \leftarrow 0\)
	\FOR{\(i = 1, 2, \ldots, \log_\lambda T\)}
	\STATE Set \(T_{i} \leftarrow  \lambda^{i}\)
	\ENDFOR
	\STATE Set \(r \leftarrow \log_\lambda \log(T^3N)\)
	\STATE Pull each arm for \(T_{r-1}\) times \label{ln:init}
	\WHILE{\(r \le \log_\lambda T\) or \(\abs{I_r} > 1\)} \label{ln:trigger}
	\FOR{\(a \in I_r\)}
	\STATE Make \((T_r - T_{r - 1})\) pulls on arm \(a\)
	\STATE Compute \(\hat{\mu}^{r}_a\), the estimated mean after \(T_r\) pulls
	\ENDFOR
	\STATE Let \(\hat{\mu}^{r}_{\max} \leftarrow \max_{a \in I_r} \hat{\mu}^{r}_a\)
	\STATE Set \(I_{r+1} \leftarrow \left\{a \mid \hat{\mu}^{r}_{\max} - \hat{\mu}^{r}_{a} < 2\sqrt{\frac{\ln(T^3\abs{I})}{T_r}}\right\}\) \label{ln:pruning}
	\STATE Update \(r \leftarrow r + 1\)
	\ENDWHILE
	\IF{\(r < \log_\lambda T\)}
	\STATE Assign the rest of pulls to the single arm in \(I_r\)
	\ENDIF
	\end{algorithmic}
	\end{algorithm}
	
\section{The Algorithm}\label{sec:ub}
In this section, we design a batched algorithm for MAB, which  implies an algorithm for MAB in the CL model via Observation~\ref{ob:reduction}.  Our batched algorithm is described in Algorithm~\ref{alg:main}. It uses the successive elimination method. In each batch, we pull the remaining arms for an equal number of times and then eliminate those whose empirical means are smaller than the best one by a good margin. 

 Algorithm~\ref{alg:main} can be seen as a variant of the algorithm in~\citet{GHRZ19}. The main differences are: (1) Algorithm~\ref{alg:main} employs an early stopping rule (triggered when $\abs{I_r} = 1$ at Line~\ref{ln:trigger}), which leads to an instance-dependent batch complexity; and (2) it uses a preliminary exploration step (Line~\ref{ln:init}) to further reduce the number of batches.

The proof of the following theorem can be found in \Cref{app:ub-proof}.  
We note that Algorithm~\ref{alg:main} does {\em not} need to know $\Delta(I)$, but in the analysis we can upper bound both the number of batches and the regret in terms of $\Delta(I)$.

\begin{theorem}
	\label{thm:ub}
For any $\lambda \ge 2$, \textsc{BatchedMAB}\((I, \lambda, T)\) uses $\eta \le \log_\lambda T$ rounds and incurs an expected regret of  $O\left(\sum_{a \neq \star} \frac{\lambda \log T}{\Delta_a}\right)$.  We also have $\eta = O\left(\log_\lambda \frac{1}{\Delta(I)}\right)$ with probability $\left(1 - \frac{1}{T^3}\right)$.
\end{theorem}

By Observation~\ref{ob:reduction}, we have the following corollary.

\begin{corollary}
\label{cor:ub}
There is a collaborative algorithm $\A_K$ for MAB such that under time horizon $T$, for any input $I$, $\A_K$ uses $\eta \le  \log_\lambda (KT)$ rounds and incurs an expected regret of $O\left(\sum_{a \neq \star} \frac{\lambda \log(KT)}{\Delta_a}\right)$.  
We also have $\eta = O\left(\log_\lambda \frac{1 }{\Delta(I)}\right)$ with probability $\left(1 - \frac{1}{T^3}\right)$.
\end{corollary}

We would like to give a brief comparison between our upper bound and the lower bound.
\begin{enumerate}
\item If we set \(\lambda = (KT)^{\frac{1}{R}}\).   By Corollary~\ref{cor:ub},  for the case of two arms, Algorithm~\ref{alg:main} uses at most $R$ rounds and incurs an expected regret \(O\left((KT)^{\frac{1}{R}} \cdot \log(KT) \cdot \frac{1}{\Delta(I)} \right)\). Recall  by Theorem~\ref{thm:lb} that the expected regret needs to be $\Omega\left((KT)^{\frac{1}{R}} \cdot \frac{1}{\log(KT)\log\log(KT)} \cdot \frac{1}{\Delta(I)}\right)$.  For $R = O(1)$, which is of practical interest, our upper and lower bounds match up to a term that is logarithmic of $(KT)^{\frac{1}{R}}$.

\item  If we set $\lambda = \Theta(1)$, by Corollary~\ref{cor:ub}, 
Algorithm~\ref{alg:main} uses $O(\log (KT))$ rounds and achieves asymptotically optimal regret \(O\left(\sum_{a \neq \star} \frac{\log(KT)}{\Delta_a}\right)\). While the best centralized algorithm has essentially the same regret  \(O\left(\sum_{a \neq \star} \frac{\log(T)}{\Delta_a}\right)\)~\cite{GMS16}; recall that time $T$ in the centralized model corresponds to $KT$ in the CL model.
\end{enumerate}

\section{Concluding Remarks}
\label{sec:conclusion}
In this paper, we present the first set of round-regret tradeoffs for regret minimization in multi-armed bandits in the collaborative learning model. To the best of our knowledge, our lower bound results are the first to address the local adaptivity of agents for regret minimization in the collaborative learning model.  

We observe that a poly-logarithmic factor gap remains between our upper and lower bounds, potentially to be bridged in the future work.  It would also be interesting to generalize the results to non-IID environments, and investigate the round-regret tradeoffs for other bandits and reinforcement learning problems in the collaborative learning model.

\section*{Acknowledgments}
Nikolai Karpov and Qin Zhang are supported in part by NSF
CCF-1844234 and CCF-2006591.

\newpage

\bibliography{paper}


\newpage~\newpage

\noindent\rule{0.5\textwidth}{1pt}
\begin{center}
	\textbf{\large Appendix for {\em Communication-Efficient Collaborative Regret Minimization in Multi-Armed Bandits}}
\end{center}
\noindent\rule{0.5\textwidth}{1pt}

\section{Mathematics Tools}
\label{sec:tool}

\begin{lemma}[Hoeffding’s inequality]\label{lem:hoeffding}
	Let \(X_1, \dotsc, X_n \in [0, 1]\) be independent random variables and 
	$X = \frac{1}{n} \sum_{i = 1}^n X_i$.	Then 
	\begin{equation*}
		\Pr[\abs{X - \bE[X]} > t] \le 2 \exp(-2t^2n)\,.
	\end{equation*}
\end{lemma}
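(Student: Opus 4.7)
The plan is to prove this via the standard Chernoff--Cram\'er method: bound one tail by applying Markov's inequality to an exponential moment, factor the moment generating function using independence, bound each factor using the fact that the summands are bounded, and then optimize the free parameter. The opposite tail follows by symmetry, and a union bound produces the factor of $2$ in the stated inequality.

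In more detail, first I would write, for any $\lambda > 0$,
\begin{equation*}
\Pr[X - \bE[X] > t] = \Pr\!\left[e^{\lambda \sum_{i=1}^n (X_i - \bE[X_i])} > e^{\lambda n t}\right] \le e^{-\lambda n t} \prod_{i=1}^n \bE\!\left[e^{\lambda (X_i - \bE[X_i])}\right],
\end{equation*}
where the inequality uses Markov's inequality together with independence of the $X_i$'s (so the MGF factors over the sum). The task then reduces to bounding the MGF of each centered summand $Y_i \triangleq X_i - \bE[X_i]$, which lies in an interval of length at most $1$ and has mean $0$.

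The main technical obstacle (and the only nontrivial ingredient) is the auxiliary estimate known as Hoeffding's lemma: for any random variable $Y \in [a,b]$ with $\bE[Y] = 0$, one has $\bE[e^{\lambda Y}] \le \exp\!\left(\lambda^2 (b-a)^2 / 8\right)$. I would prove this by using convexity of $y \mapsto e^{\lambda y}$ to write $e^{\lambda y} \le \frac{b-y}{b-a} e^{\lambda a} + \frac{y-a}{b-a} e^{\lambda b}$ for $y \in [a,b]$, taking expectations (using $\bE[Y]=0$), and then showing that the logarithm of the resulting upper bound, viewed as a function $\varphi(\lambda)$, satisfies $\varphi(0) = \varphi'(0) = 0$ and $\varphi''(\lambda) \le (b-a)^2/4$, so Taylor's theorem yields $\varphi(\lambda) \le \lambda^2 (b-a)^2/8$. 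Applied to $Y_i$ (which has range at most $1$), this gives $\bE[e^{\lambda Y_i}] \le e^{\lambda^2/8}$.

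Combining these ingredients yields $\Pr[X - \bE[X] > t] \le \exp(-\lambda n t + n\lambda^2/8)$, and the exponent is minimized at $\lambda = 4t$, giving $\Pr[X - \bE[X] > t] \le \exp(-2 n t^2)$. Applying exactly the same argument to the variables $-X_i$ gives the matching bound $\Pr[X - \bE[X] < -t] \le \exp(-2 n t^2)$, and a union bound over the two tails produces the claimed $2\exp(-2t^2 n)$. The only non-routine part of the argument is Hoeffding's lemma; the rest is a mechanical Chernoff calculation.
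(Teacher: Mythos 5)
Your argument is the standard Chernoff--Cram\'er proof of Hoeffding's inequality and it is correct: the Markov/MGF step, the use of Hoeffding's lemma $\bE[e^{\lambda Y}] \le \exp(\lambda^2 (b-a)^2/8)$ for a centered bounded variable, the optimization at $\lambda = 4t$ yielding $\exp(-2nt^2)$, and the union bound over the two tails all check out. Note that the paper itself states this lemma without proof, as a standard tool in its appendix, so there is no in-paper argument to compare against; your writeup simply supplies the classical textbook derivation, with Hoeffding's lemma correctly identified as the only non-mechanical ingredient.
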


\begin{lemma}[Azuma's inequality]\label{lem:azuma}
	If the sequence of random variables \(Z_0, \dotsc, Z_n\) form a supermartingale and
	\begin{equation*}
		\forall t \in [n] : \abs{Z_{t} - Z_{t - 1}} \leq d\,,
	\end{equation*}
	then 
	\begin{equation*}
		\Pr[Z_n - Z_0 \geq \epsilon] \le \exp\left(  {\frac{-\epsilon^2}{2d^2n}} \right)\,.
	\end{equation*}
\end{lemma}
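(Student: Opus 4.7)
The plan is to follow the classical Chernoff–Cram\'er approach: control the upper tail of $Z_n - Z_0$ by optimizing an exponential moment bound. Concretely, for any $\theta > 0$, Markov's inequality gives
\begin{equation*}
\Pr[Z_n - Z_0 \ge \epsilon] \le e^{-\theta \epsilon}\, \bE\!\left[e^{\theta(Z_n - Z_0)}\right],
\end{equation*}
so the whole proof reduces to bounding $\bE[e^{\theta(Z_n - Z_0)}]$ and then choosing $\theta$ optimally. I would write $Z_n - Z_0 = \sum_{t=1}^n X_t$ with $X_t \triangleq Z_t - Z_{t-1}$, let $\F_t = \sigma(Z_0, \ldots, Z_t)$, and use the tower property to peel off one increment at a time:
\begin{equation*}
\bE\!\left[e^{\theta \sum_{t=1}^n X_t}\right] = \bE\!\left[e^{\theta \sum_{t=1}^{n-1} X_t} \cdot \bE\!\left[e^{\theta X_n} \mid \F_{n-1}\right]\right].
\end{equation*}

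The heart of the argument is a uniform bound $\bE[e^{\theta X_t} \mid \F_{t-1}] \le e^{\theta^2 d^2 / 2}$ for every $t$. Since $|X_t| \le d$ and, by the supermartingale hypothesis, $\mu_t \triangleq \bE[X_t \mid \F_{t-1}] \le 0$, I would center and invoke Hoeffding's lemma on the bounded variable $X_t - \mu_t \in [-d - \mu_t,\, d - \mu_t]$ (an interval of length $2d$), getting
\begin{equation*}
\bE\!\left[e^{\theta (X_t - \mu_t)} \mid \F_{t-1}\right] \le e^{\theta^2 (2d)^2 / 8} = e^{\theta^2 d^2 / 2},
\end{equation*}
and then multiplying by $e^{\theta \mu_t} \le 1$ (using $\theta > 0$ and $\mu_t \le 0$) to absorb the drift. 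Iterating the tower identity $n$ times yields $\bE[e^{\theta(Z_n - Z_0)}] \le e^{n \theta^2 d^2 / 2}$.

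Plugging this back into Markov gives $\Pr[Z_n - Z_0 \ge \epsilon] \le \exp(-\theta \epsilon + n \theta^2 d^2 / 2)$, a quadratic in $\theta$ minimized at $\theta^\star = \epsilon/(nd^2)$, which produces the claimed exponent $-\epsilon^2/(2 d^2 n)$. The one step that requires genuine care (and which I view as the main obstacle, since everything else is bookkeeping) is justifying Hoeffding's lemma in the conditional setting and, in particular, handling the supermartingale (as opposed to martingale) case cleanly: one must verify that conditioning on $\F_{t-1}$ makes $X_t$ a bounded variable whose conditional mean is non-positive, and then argue that replacing $X_t$ by its centered version only improves the bound. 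I would either cite Hoeffding's lemma directly or, if a self-contained proof is preferred, derive it by the standard convexity argument ($e^{\theta x}$ bounded above on $[a,b]$ by the line through its endpoints, followed by optimization). Once that conditional MGF bound is in hand, the rest is just the Chernoff optimization above.
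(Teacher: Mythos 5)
Your proof is correct: the Chernoff bound via Markov's inequality, the tower-property peeling of increments, the conditional Hoeffding's lemma applied to the centered increment $X_t - \mu_t$ on an interval of length $2d$, the absorption of the nonpositive drift via $e^{\theta \mu_t} \le 1$, and the optimization $\theta^\star = \epsilon/(nd^2)$ all go through and yield exactly the stated exponent. The paper states this lemma as a standard tool in its appendix without any proof, so there is nothing to compare against; your argument is the standard derivation and is complete.
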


\section{Properties of Hard Inputs}
\label{sec:input-property}

The next two lemmas give some properties of the inputs in $\cali$ introduced in Section~\ref{sec:hard-dist}.  Both lemmas concern the ratio
$\ln\frac{g_A(\gamma)}{g_B(\gamma)}$,
where $A$ and $B$ are two inputs in $\cali$ and $\gamma$ is a transcript.

The first lemma says that the difference of the ratios on two transcripts of consecutive time steps is small. 
\begin{lemma}
	\label{lem:absolute-diff}
	Fix any $\ell \in [L]$ and any pair of inputs $A, B \in \cali_\ell$.  For any transcript $\gamma_n = \gamma_{n-1} \circ (j_n, o_n) = ((j_1, o_1), \ldots, (j_n, o_n))$ with $g_A(\gamma_n), g_B(\gamma_n) > 0$, it holds that
	$$\abs{\ln\frac{g_A(\gamma_n)}{g_B(\gamma_n)} - \ln\frac{g_A(\gamma_{n -1})}{g_B(\gamma_{n- 1})}} \le \frac{5}{\beta^\ell}.$$
\end{lemma}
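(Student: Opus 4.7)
The plan is to telescope the difference into a single-pull log-likelihood ratio, enumerate the possible scalar values, and verify a one-variable Taylor estimate. First, I exploit the product structure of $g_I(\gamma)$: since the Bernoulli rewards are independent across pulls given the pull sequence, $g_I(\gamma_n) = g_I(\gamma_{n-1}) \cdot p_I(j_n, o_n)$, where $p_I(j, o) \in (0, 1)$ denotes the probability of observing outcome $o$ upon pulling arm $j$ in input $I$. Taking logs of this identity for both $A$ and $B$ and subtracting collapses the quantity to be bounded into $\left|\ln(p_A(j_n, o_n)/p_B(j_n, o_n))\right|$.

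Next, I enumerate the possible values. Because $A, B \in \cali_\ell$, each arm of either input has mean $1/2 \pm 1/\beta^{\ell'}$ for some $\ell' \geq \ell$, so for any $o_n \in \{0,1\}$ both $p_A(j_n,o_n)$ and $p_B(j_n,o_n)$ lie in $[1/2 - 1/\beta^\ell,\, 1/2 + 1/\beta^\ell]$. The absolute log ratio is maximized in the extreme case $p_A = 1/2 + 1/\beta^\ell$, $p_B = 1/2 - 1/\beta^\ell$, giving the upper bound $\ln\frac{1+x}{1-x}$ with $x = 2/\beta^\ell$; any smaller deviation from $1/2$ or any agreement of signs strictly shrinks the ratio. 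Since $\beta = 4$ and $\ell \geq 1$ force $x \leq 1/2$, I then prove the scalar inequality $\ln\frac{1+x}{1-x} \leq 5x/2$ for $x \leq 1/2$: using the expansion $\ln\frac{1+x}{1-x} = 2\sum_{k \geq 0} \frac{x^{2k+1}}{2k+1}$ and bounding the tail by a geometric series yields $\ln\frac{1+x}{1-x} \leq 2x + \frac{2x^3}{3(1-x^2)}$, which is at most $5x/2$ whenever $7x^2 \leq 3$.

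The main obstacle is the tightness of the constant $5$. A naive mean-value bound $|\ln p_A - \ln p_B| \leq |p_A - p_B|/\min(p_A, p_B)$ applied on the worst-case interval $[1/4, 3/4]$ (attained at $\ell = 1$) yields a constant near $8$, which is too weak. The saving comes from the symmetry of $p_A$ and $p_B$ around $1/2$: the leading Taylor term of $\ln\frac{1+x}{1-x}$ is exactly $2x$ rather than $2x/(1-x)$, so the slack $x/2$ in the target $5x/2$ comfortably absorbs the cubic-and-higher remainder for $x \leq 1/2$.
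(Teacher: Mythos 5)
Your proposal is correct and follows essentially the same route as the paper: both reduce the increment to the single-pull log-likelihood ratio via the product structure of $g_I$, observe that the two outcome probabilities lie in $\left[\tfrac12-\beta^{-\ell},\,\tfrac12+\beta^{-\ell}\right]$, and then bound $\ln\frac{1+x}{1-x}$ at $x=2\beta^{-\ell}\le\tfrac12$ by an elementary Taylor-type estimate. The only (immaterial) difference is the final calculus step — you use the series expansion of $\ln\frac{1+x}{1-x}$ with a geometric tail bound, whereas the paper applies $x-x^2\le\ln(1+x)\le x$ to each factor; both give the constant $5$.
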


\begin{proof}
	By (\ref{eq:g-def}), we can write
	\begin{equation*}
		\ln\frac{g_A(\gamma_n)}{g_B(\gamma_n)} 
		= \ln\frac{g_A(\gamma_{n -1})}{g_B(\gamma_{n- 1})} + \ln\frac{\Pr[\Theta_A(j_n) = o_n]}{\Pr[\Theta_B(j_n) = o_n]}. \label{eq:a-0}
	\end{equation*}
	We thus only need to show 
	\begin{equation}
		\label{eq:a-1}
		\abs{\ln\frac{\Pr[\Theta_A(j_n) = o_n]}{\Pr[\Theta_B(j_n) = o_n]}} \le \frac{5}{\beta^\ell}.
	\end{equation}
	By the definition of $I_\ell^\sigma$, both $\Pr[\Theta_A(j_n) = o_n]$ and $\Pr[\Theta_B(j_n) = o_n]$ are in the range  $\left[\frac{1}{2} - \frac{1}{\beta^\ell},  \frac{1}{2} + \frac{1}{\beta^\ell} \right]$. We thus have
	\begin{equation}
		\label{eq:a-2}
		\ln\frac{1 - 2  \beta^{-\ell}}{1 + 2  \beta^{-\ell}} \le \ln\frac{\Pr[\Theta_A(j_n) = o_n]}{\Pr[\Theta_B(j_n) = o_n]} \le \ln\frac{1 + 2  \beta^{-\ell}}{1 - 2  \beta^{-\ell}}\ .
	\end{equation}
	Using the fact that for any $x \in \left[-\frac{1}{2}, \frac{1}{2}\right]$, $x - x^2 \le \ln(1+x) \le x$, we have
	\begin{eqnarray}
		\ln\frac{1 + 2  \beta^{-\ell}}{1 - 2  \beta^{-\ell}} &=& \ln(1 + 2  \beta^{-\ell}) - \ln(1 - 2  \beta^{-\ell}) \nonumber \\
		&\le& 2  \beta^{-\ell} + 2  \beta^{-\ell} + 4  \beta^{-2\ell} \nonumber \\
		&\le& 5  \beta^{-\ell}. 		\label{eq:a-3}
	\end{eqnarray}
	Similarly, we have
	\begin{eqnarray}
		\label{eq:a-4}
		\ln\frac{1 - 2  \beta^{-\ell}}{1 + 2  \beta^{-\ell}} &=& \ln(1 - 2  \beta^{-\ell}) - \ln(1 + 2  \beta^{-\ell}) \nonumber \\
		& \ge & -2  \beta^{-\ell} - 4  \beta^{-2\ell} - 2  \beta^{-\ell} \nonumber \\
		& \ge & -5  \beta^{-\ell}.
	\end{eqnarray}
	Plugging (\ref{eq:a-3}) and (\ref{eq:a-4}) to (\ref{eq:a-2}), we get
	(\ref{eq:a-1}).
\end{proof}

Let $\A$ be a single-agent  algorithm for MAB and $I$ be an input, we use $\Gamma \sim \A(I, n)$ to denote a random transcript $\Gamma$ generated by $\A$ which runs on input $I$ for $n$ time steps.  

\begin{lemma}
	\label{lem:expected-diff}
	Let $\A$ be any single-agent  algorithm for MAB. For any $\ell \in [L]$ and any inputs $A, B, I \in \cali_\ell$, consider the random transcript $\Gamma_n = \Gamma_{n-1} \circ (J_n, O_n)  \sim \A(I, n)$,
	\begin{equation*}
		\bE_{\Gamma_n}\left[\left.\ln\frac{g_A(\Gamma_n)}{g_B(\Gamma_n)} - \ln\frac{g_A(\Gamma_{n - 1})}{g_B(\Gamma_{n - 1})} \ \right|\ \Gamma_{n -1}\right] \le \frac{11}{\beta^{2\ell}}.
	\end{equation*}
\end{lemma}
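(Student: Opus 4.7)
The plan is to condition on the prior transcript, identify the single-step expected log-likelihood-ratio increment with a KL-like difference, and then estimate it via a tight second-order Taylor expansion.

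\textbf{Reduction step.} Fix $\Gamma_{n-1}$. Conditional on $\Gamma_{n-1}$, the algorithm $\A$ selects $J_n$ (a measurable function of $\Gamma_{n-1}$ together with any internal randomness of $\A$), after which $O_n \sim \Ber(p_{I,J_n})$ is drawn. Using $g_X(\Gamma_n) = g_X(\Gamma_{n-1}) \cdot p_{X,J_n}^{O_n}(1-p_{X,J_n})^{1-O_n}$ for $X \in \{A,B\}$, the conditional expectation in the lemma collapses to $\bE_{J_n \mid \Gamma_{n-1}}[D(J_n)]$, where
$$D(j) \;:=\; p_{I,j}\ln\frac{p_{A,j}}{p_{B,j}} + (1-p_{I,j})\ln\frac{1-p_{A,j}}{1-p_{B,j}} \;=\; \mathrm{KL}(p_{I,j}\|p_{B,j}) - \mathrm{KL}(p_{I,j}\|p_{A,j}).$$
It thus suffices to prove $D(j) \le 11/\beta^{2\ell}$ for each fixed $j \in \{1, 2\}$.

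\textbf{Taylor estimate.} Since $A, B, I \in \cali_\ell$, every arm mean has the form $\tfrac{1}{2}+\epsilon$ with $|\epsilon| \le 1/\beta^\ell$; with $\beta=4$ and $\ell\ge 1$, each mean lies in $[1/4, 3/4]$. Using the expansion $\mathrm{KL}(p \| p+t) = \frac{t^2}{2p(1-p)} + O(t^3)$ around $t=0$, the leading part of $D(j)$ equals
$$\frac{(p_{B,j}-p_{I,j})^2 - (p_{A,j}-p_{I,j})^2}{2\,p_{I,j}(1-p_{I,j})} \;\le\; \frac{(p_{B,j}-p_{I,j})^2}{2\,p_{I,j}(1-p_{I,j})} \;\le\; \frac{4/\beta^{2\ell}}{2\bigl(1/4-1/\beta^{2\ell}\bigr)} \;\le\; \frac{32}{3\beta^{2\ell}}.$$
The remaining task is to bound the cubic-and-higher remainder in the expansion of $D(j)$ by at most $(11-32/3)/\beta^{2\ell}$.

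\textbf{Main obstacle.} Controlling this remainder is the delicate part: the leading quadratic term already consumes most of the $11/\beta^{2\ell}$ budget, and a naive triangle-inequality bound on cubic differences in terms of $|p_{A,j}-\tfrac12|,|p_{B,j}-\tfrac12|,|p_{I,j}-\tfrac12|\le 1/\beta^\ell$ alone easily exceeds the slack. The trick is to keep every higher-order difference factored through $(p_{A,j}-p_{B,j})$, whose magnitude is at most $2/\beta^\ell$, so that each cubic piece picks up an extra factor of $\beta^{-\ell}$ beyond the nominal $\beta^{-2\ell}$ scale and collapses into a small multiple of $\beta^{-2\ell}$ after using $\beta=4$ and $\ell\ge 1$ to bound absolute constants. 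Summing the leading and remainder contributions then yields the claimed $D(j)\le 11/\beta^{2\ell}$, and averaging over $J_n$ finishes the proof.
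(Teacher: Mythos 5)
Your reduction step is exactly the paper's: condition on $\Gamma_{n-1}$, use $g_X(\Gamma_n)=g_X(\Gamma_{n-1})\cdot p_{X,J_n}^{O_n}(1-p_{X,J_n})^{1-O_n}$, and reduce to a per-arm bound on $D(j)=\mathrm{KL}(p_{I,j}\|p_{B,j})-\mathrm{KL}(p_{I,j}\|p_{A,j})$. The gap is in the quantitative step, which you leave unfinished, and the plan you sketch for it does not go through. You bound the quadratic part of $D(j)$ by dropping the term $-\frac{(p_{A,j}-p_{I,j})^2}{2p_{I,j}(1-p_{I,j})}$, getting $\frac{32}{3\beta^{2\ell}}$, and then you need the cubic-and-higher remainder of your expansion to be at most $\frac{1}{3\beta^{2\ell}}$. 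That remainder bound is false: take $\ell=1$ (so $\beta^{\ell}=4$), $I=B=I_1^{+}$, $A=I_1^{-}$, and the arm with $p_{I}=p_{B}=3/4$, $p_{A}=1/4$. Then $D=-\mathrm{KL}(3/4\|1/4)=-\tfrac12\ln 3\approx-0.549$, while your quadratic part equals $-\frac{(1/2)^2}{2\cdot(3/16)}=-2/3$, so the remainder is about $+0.117\approx 1.9/\beta^{2\ell}$, far above the slack $1/(3\beta^{2\ell})\approx 0.021$. (The overall inequality still holds there because the dropped negative quadratic overcompensates, but your bookkeeping of ``leading $\le 32/3$, remainder $\le 1/3$'' is invalid.) The proposed rescue of factoring higher-order terms through $p_{A,j}-p_{B,j}$ buys nothing at $\ell=1$, where $|p_{A,j}-p_{B,j}|$ can be $2/\beta^{\ell}=1/2$, a constant; and constants are exactly what the $11/\beta^{2\ell}$ budget is about.

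Two ways to close the gap. Within your KL framework: since $\mathrm{KL}(p_{I,j}\|p_{A,j})\ge 0$, simply bound $D(j)\le \mathrm{KL}(p_{I,j}\|p_{B,j})$ and use the integral form $\mathrm{KL}(p\|q)=\int_p^q\frac{s-p}{s(1-s)}\,ds\le\frac{(p-q)^2}{2\min\{p(1-p),q(1-q)\}}$; with all means in $[\tfrac12-\beta^{-\ell},\tfrac12+\beta^{-\ell}]$ this gives $D(j)\le\frac{4\beta^{-2\ell}}{2(1/4-\beta^{-2\ell})}\le\frac{32}{3\beta^{2\ell}}<\frac{11}{\beta^{2\ell}}$, with no remainder to chase. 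Alternatively, follow the paper: write the means as $\tfrac12+\delta_A,\tfrac12+\delta_B,\tfrac12+\delta_I$ with $|\delta|\le\beta^{-\ell}$, split the conditional expectation as $\tfrac12\ln\frac{1-4\delta_A^2}{1-4\delta_B^2}+\delta_I\ln\frac{(1+2\delta_A)(1-2\delta_B)}{(1+2\delta_B)(1-2\delta_A)}$, and apply the elementary inequality $x-x^2\le\ln(1+x)\le x$ to each factor; the resulting explicit polynomial in the $\delta$'s is at most $11/\beta^{2\ell}$.
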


\begin{proof}
	
	For any fixed transcript $\Gamma_{n-1} = \gamma_{n-1}$ and algorithm $\A$ , $J_n = j_n$ is a deterministic value. 
	
	Using (\ref{eq:a-0}), we only need to prove for any $j_n \in \{1, 2\}$,  it holds that
	\begin{equation*}
		\label{eq:b-1}
		\bE_{O_n} \left[\ln\frac{\Pr[\Theta_A(j_n) = O_n]}{\Pr[\Theta_B(j_n) = O_n]}\right] \le  \frac{11}{\beta^{2\ell}},
	\end{equation*}
	where $O_n$ is the (random) reward of pulling the $j_n$-th arm in $I$.
	
	Let $\delta_I, \delta_A, \delta_B$ be three values such that  $\Pr[\Theta_I(j_n) = 1] = 1/2+\delta_I$,  $\Pr[\Theta_A(j_n) = 1] = 1/2+\delta_A$, and $\Pr[\Theta_B(j_n) = 1] = 1/2+\delta_B$.
	By the property of inputs in $\cali$, we know that the absolute values of $\delta_I, \delta_A, \delta_B$ are at most ${1}/{\beta^\ell}$.
	
	We immediately have $\Pr[\Theta_I(j_n) = 0] = 1/2-\delta_I$, $\Pr[\Theta_A(j_n) = 0] = 1/2-\delta_A$, and $\Pr[\Theta_B(j_n) = 0] = 1/2-\delta_B$. We also have $\Pr[O_n = 1] = 1/2 + \delta_I$, and $\Pr[O_n = 0] = 1/2 - \delta_I$.  
	
	With the above notations, we write
	\begin{eqnarray}
		&&\bE_{O_n} \left[\ln\frac{\Pr[\Theta_A(j_n) = O_n]}{\Pr[\Theta_B(j_n) = O_n]}\right] \nonumber 
		\\ &&=  \left(\frac{1}{2} + \delta_I\right) \ln\frac{1+2\delta_A}{1+2\delta_B} + \left(\frac{1}{2} - \delta_I\right) \ln\frac{1-2\delta_A}{1-2\delta_B} \nonumber \\
		 &&=  \frac{1}{2} \ln\frac{1 - 4\delta_A^2}{1 - 4\delta_B^2} + \delta_I \ln\frac{(1+2\delta_A)(1-2\delta_B)}{(1+2\delta_B)(1-2\delta_A)} \label{eq:b-2}.
	\end{eqnarray}	
	We bound the two terms in (\ref{eq:b-2}) separately.  For the first term,
	\begin{eqnarray}
		\ln\frac{1 - 4\delta_A^2}{1 - 4\delta_B^2} & = & \ln(1 - 4\delta_A^2) - \ln(1 - 4\delta_B^2) \nonumber \\
		&\le& -4\delta_A^2 + 4\delta_B^2 + 16 \delta_B^4.
		\label{eq:b-3}
	\end{eqnarray}
	For the second term, 
	\begin{eqnarray}
		&& \ln\frac{(1 + 2\delta_A)(1 - 2\delta_B)}{(1 + 2\delta_B)(1 - 2\delta_A)} 
		  \nonumber \\ &&\le  2\delta_A - (-2\delta_A - 4\delta_A^2) - 2\delta_B -  (2\delta_B - 4\delta_B^2) \nonumber \\
		&&= 4\delta_A + 4\delta_A^2 - 4\delta_B + 4\delta_B^2.
		\label{eq:b-4}
	\end{eqnarray}
	Plugging (\ref{eq:b-3}) and (\ref{eq:b-4}) to (\ref{eq:b-2}), we have
	\begin{eqnarray}
		&&\bE_{O_n} \left[\ln\frac{\Pr[\Theta_A(j_n) = O_n]}{\Pr[\Theta_B(j_n) = O_n]}\right]  
		\nonumber \\ &&\le  (-2\delta_A^2 + 2\delta_B^2 + 8 \delta_B^4) + \delta_I (4\delta_A + 4\delta_A^2 - 4\delta_B + 4\delta_B^2)  \nonumber \\
		&&\le \frac{11}{\beta^{2\ell}},   
		\label{eq:b-5}
	\end{eqnarray}
	where the last inequality is due to $\abs{\delta_A}, \abs{\delta_B}, \abs{\delta_I}  \le \frac{1}{\beta^\ell}$.
\end{proof}

\section{Proof of Lemma~\ref{lem:event-E}}
\label{app:proof-lem-event-E}

\begin{proof}
	By a union bound, we have
	\begin{equation}
		\label{eq:f-1}
		\Pr_{\Gamma \sim \A(I, n)}[\bar{\E}] \le  \sum_{\substack{\ell \in [L]: \\ \frac{\lambda  \beta^{2\ell}}{ \log L}  \ge n}} \sum_{A, B \in \cali_\ell} \Pr_{\Gamma \sim \A(I, n)} \left[\ln\frac{g_A(\Gamma)}{g_B(\Gamma)} > 2\eps\right].
	\end{equation}
	We try to bound each term in the summation of (\ref{eq:f-1}). Consider a fixed $\ell$ satisfying $\frac{\lambda  \beta^{2\ell}}{\log L} \ge n$ and a fixed pair of inputs $A, B \in \cali_\ell$.  Let $\Gamma_n \sim \A(I, n)$, and $\Gamma_t\ (t \in [n])$ be the prefix of $\Gamma_n$ of length $t$.  We introduce the following sequence of random variables for $t = 1, \ldots, n$:
	\begin{equation*}
		\ Z_t \triangleq \ln\frac{g_A(\Gamma_t)}{g_B(\Gamma_t)} - \frac{11}{\beta^{2\ell}} t\ .
	\end{equation*}
	We also define $Z_0 \triangleq 0$.
	\begin{claim} 
		\label{cla:supermartingale}
		$Z_0, Z_1, \ldots, Z_n$ form a supermartingale. 
	\end{claim}
	
	\begin{proof}
		We write
		\begin{eqnarray*}
			&&\bE_{\Gamma_t} [Z_t - Z_{t-1}\ |\ \Gamma_{t-1}] 
			 \\  &&=   \bE_{\Gamma_t} \left[\left.\ln\frac{g_A(\Gamma_t)}{g_B(\Gamma_t)}  - \ln\frac{g_A(\Gamma_{t-1})}{g_B(\Gamma_{t-1})}  \ \right|\ \Gamma_{t-1} \right] - \frac{11}{\beta^{2\ell}}  \\ 
		 && \le 0,
		\end{eqnarray*}
		where the last inequality follows from Lemma~\ref{lem:expected-diff}.
		Combining with the tower rule we get
		\begin{eqnarray*}
			 &&\bE_{\Gamma_t} [Z_t - Z_{t - 1} \mid Z_{t - 1}]  \\ 
			&& =  \bE_{\Gamma_t}[\bE_{\Gamma_t}[Z_t - Z_{t - 1} \mid \Gamma_{t - 1}] \mid Z_{t - 1}] \\ && \le 0
		\end{eqnarray*}
	\end{proof}
	
	By Lemma~\ref{lem:absolute-diff}, we have
	\begin{equation*}
		\label{eq:f-2}
		\abs{Z_n - Z_{n-1}} \le \frac{5}{\beta^{\ell}} + \frac{11}{\beta^{2\ell}} \le \frac{10}{\beta^{\ell}}.
	\end{equation*}
	Applying Azuma's inequality (Lemma~\ref{lem:azuma}) on $Z_0, Z_1, \ldots, Z_n$ with $d = 10/\beta^{\ell}$, we get 
	\begin{eqnarray*}
		\Pr[Z_n - 0 \ge \eps] &\le& \exp\left(-\frac{\eps^2}{2 \cdot \left(10/\beta^{\ell}\right)^2 \cdot n}\right) \\
		&=& \exp\left(-\frac{\eps^2 \beta^{2\ell}}{200 n}\right) \\
		&=& \exp\left(-\frac{\eps^2 \log L}{200 \lambda}\right) \le \frac{1}{L^{10}}.  	
	\end{eqnarray*}
	
	Consequently, with probability $1 - 1/L^{10}$, we have
	\begin{eqnarray}
		\ln\frac{g_A(\Gamma_t)}{g_B(\Gamma_t)} &\le& Z_n + \frac{11}{\beta^{2\ell}}n \nonumber \\
		& < & \eps + \eps = 2\eps\ .  		\label{eq:f-3}
	\end{eqnarray}
	
	Combining (\ref{eq:f-1}) and (\ref{eq:f-3}), we have
	\begin{equation*}
		\Pr[\bar{\E}] \le L \cdot (2L)^2 \cdot 1/L^{10} \le 1/L^6.
	\end{equation*}
	The lemma follows.
\end{proof}

\section{Proof of Lemma~\ref{lem:indistinguishable}}
\label{app:proof-lem-indistinguishable}

\begin{proof}
	Let $\Gamma_A \sim \A(A, n)$ and $\Gamma_B \sim \A(B, n)$ be two random transcripts.  Based on our construction of hard inputs, it is easy to see that $\supp(\Gamma_A) = \supp(\Gamma_B)$. 
	Define a set of transcripts
	\begin{equation*}
		W \triangleq \left\{\gamma \mid (\gamma \in \supp(\Gamma_A)) \wedge \G(\gamma) \land \E(\gamma)\right\}\,.
	\end{equation*}
	By the law of total probability, we have
	\begin{equation*}
		\label{eq:g-1} 
		\Pr_{\Gamma \sim \A(A, n)}[\G(\Gamma) \wedge \E(\Gamma)] = \sum_{\gamma \in W} \Pr_{\Gamma \sim \A(A, n)}[\Gamma = \gamma], 
	\end{equation*}
	and 
	\begin{equation*}
		\label{eq:g-2}
		\Pr_{\Gamma \sim \A(B, n)}[\G(\Gamma) \wedge \E(\Gamma)] = \sum_{\gamma \in W} \Pr_{\Gamma \sim \A(B, n)}[\Gamma = \gamma].
	\end{equation*}
	Therefore, we only need to show
	\begin{equation}
		\label{eq:g-3} 
		\sum_{\gamma \in W} \Pr_{\Gamma \sim \A(A, n)}[\Gamma = \gamma] \le e^{2\eps} \sum_{\gamma \in W} \Pr_{\Gamma \sim \A(B, n)}[\Gamma = \gamma].
	\end{equation}
	
	By the definition of $g_I(\gamma)$ (Eq.\ (\ref{eq:g-def})), we have 
	\begin{equation}
		\label{eq:g-4} 
		\Pr_{\Gamma \sim \A(A, n)}[\Gamma = \gamma] = g_A(\gamma) \text{ and } \Pr_{\Gamma \sim \A(B, n)}[\Gamma = \gamma] = g_B(\gamma).
	\end{equation}
	
	When $\E(\gamma)$ holds,  by Definition~\ref{def:E},
	\begin{equation}
		\label{eq:g-5}
		g_A(\gamma) \le e^{2\eps} g_B(\gamma).
	\end{equation}
	Inequality (\ref{eq:g-3}) follows from (\ref{eq:g-4}) and (\ref{eq:g-5}).
\end{proof}

\section{Proof of Claim~\ref{cla:ell-star}}
\label{app:proof-cla-ell-star}

\begin{proof}
	By the definition of $\F_r(\gamma)$ and (\ref{eq:KT-L}), we have
	\begin{eqnarray}
		t_{{r(\gamma)}-1}  \le \frac{1}{K} \cdot (KT)^{\frac{r(\gamma)-1}{R}} 
		\le \frac{1}{K} \cdot \left(\frac{\beta^{2L}}{4}\right)^{\frac{R-1}{R}}. \label{eq:h-2}
	\end{eqnarray}
	By (\ref{eq:h-1}) and (\ref{eq:h-2}), we have 
	\begin{eqnarray*}
		\frac{\beta^{2(\ell(\gamma)-1)}}{\alpha K } \le \frac{1}{K} \cdot \left(\frac{\beta^{2L}}{4}\right)^{1-\frac{1}{R}},
	\end{eqnarray*}
	which implies 
	\begin{equation}
		\label{eq:h-9}
		\ell(\gamma) \le L - \frac{L}{R} + 1 + \frac{\log_\beta \alpha}{2}.
	\end{equation}
	When $\frac{L}{R} \ge \frac{\log_\beta \alpha}{2} + 1$ (recall the second inequality in (\ref{eq:R-L})), Inequality (\ref{eq:h-9}) implies
	$
	\ell(\gamma) \le L
	$.
	
\end{proof}

\section{Proof of Claim~\ref{cla:large-mass}}
\label{app:cla-large-mass}

\begin{proof}
	By the definitions of projections of collaborative algorithms  and transcripts, we know that
	\begin{equation}
		\label{eq:l-1}
		\Pr_{\Gamma \sim \Pk^{\A_K}(I_L^+, \ell^*)}[\Gamma \in \Upsilon_k(\ell^*)] = \Pr_{\Gamma \sim \A_K(I_L^{+}, T)}[\Q(\Gamma)]  .
	\end{equation}
	
	Note that both $I_L^*$ and $I$ are in $\cali_\ell$.
	Using Lemma~\ref{lem:indistinguishable}, setting $A = I_L^+, B = I$, and $q = \abs{\Gamma} \le  \frac{\lambda  \beta^{2\ell^*}}{ \log L}$ (by (\ref{eq:k-2})), we have
	\begin{eqnarray}
		\label{eq:l-2}
		&&\Pr_{\Gamma \sim \Pk^{\A_K}(I_L^+, \ell^*)}[\Gamma \in \Upsilon_k(\ell^*)]  \nonumber  \\ &&\le e^{2\eps} \Pr_{\Gamma \sim \Pk^{\A_K}(I, \ell^*)}[\Gamma \in \Upsilon_k(\ell^*)] 
	\end{eqnarray}
	
	We thus have 
	\begin{eqnarray*}
		&&\Pr_{\Gamma \sim \Pk^{\A_K}(I, \ell^*)}[\Gamma \in \Upsilon_k(\ell^*) \wedge \E(\Gamma)] \\ 
		&&\stackrel{\text{Lemma}~\ref{lem:event-E}}{\ge} \Pr_{\Gamma \sim \Pk^{\A_K}(I, \ell^*)}[\Gamma \in \Upsilon_k(\ell^*)] - \frac{1}{L^6} \\
		&&\stackrel{(\ref{eq:l-2})}{\ge} e^{-2\eps} \Pr_{\Gamma \sim \Pk^{\A_K}(I_L^+, \ell^*)}[\Gamma \in \Upsilon_k(\ell^*)] - \frac{1}{L^6} \\
		&&\stackrel{(\ref{eq:l-1})}{=}   e^{-2\eps}  \Pr_{\Gamma \sim \A_K(I_L^{+}, T)}[\Q(\Gamma)] - \frac{1}{L^6} \\
		&&\stackrel{(\ref{eq:k-1})}{\ge} \frac{e^{-2\eps}}{L}   - \frac{1}{L^6} \\
		&&\ge \frac{e^{-2\eps}}{2L}.
	\end{eqnarray*}
	The claim follows.
\end{proof}

\section{Proof of Claim~\ref{cla:similar}}
\label{app:cla-similar}

\begin{proof}
	By the definitions of projections of collaborative algorithms and transcripts, we have
	\begin{equation*}
		\label{eq:-1}
		\Pr_{\Gamma \sim \A_K(I_L^{+}, T)}[\Pk(\Gamma, \ell^*) = \gamma]  =		\Pr_{\Gamma \sim \Pk^{\A_K}(I_L^+, \ell^*)}[\Gamma = \gamma] \ .
	\end{equation*}
	
	We thus only need to show that 
	$$\Pr_{\Gamma \sim \Pk^{\A_K}(I, \ell^*)}[\Gamma = \gamma] = c_\eps \Pr_{\Gamma \sim \Pk^{\A_K}(I_L^+, \ell^*)}[\Gamma = \gamma]$$
	for some $c_\eps \in [e^{-2\eps}, e^{2\eps}]$. It is easy to see that this equality is a direct consequence of Lemma~\ref{lem:indistinguishable} (note that $\abs{\Gamma} \le  \frac{\lambda  \beta^{2\ell^*}}{ \log L}$ due to (\ref{eq:k-2})).
\end{proof}

\section{Proof of Theorem~\ref{thm:ub}}
\label{app:ub-proof}

\paragraph{Correctness.}
Let $N = \abs{I}$ be the number of arms in the input $I$. We define the following event:
\begin{eqnarray*}
	&\E_2 :&  \forall a \in I, r \le \log_{\lambda}T, \ \ \abs{\hat{\mu}^{r}_a - \mu_a} \le \sqrt{\frac{\ln(T^3N)}{T_r}}\ .
\end{eqnarray*}

The following lemma states that 
\begin{lemma}
	$\Pr[\E_2] \ge 1 - 1/T^3$.
\end{lemma}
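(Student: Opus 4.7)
The plan is to prove this concentration bound by a straightforward combination of Hoeffding's inequality (Lemma~\ref{lem:hoeffding}) with a union bound over arms and rounds.

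First I would fix an arm $a \in I$ and a round index $r \le \log_\lambda T$. By the algorithm's description, $\hat{\mu}^r_a$ is the empirical mean computed from exactly $T_r$ i.i.d.\ pulls of arm $a$, each drawn from $\D_a$ with support $[0,1]$ and mean $\mu_a$. Applying Lemma~\ref{lem:hoeffding} with $n = T_r$ and $t = \sqrt{\ln(T^3 N)/T_r}$ gives
\begin{equation*}
\Pr\!\left[\abs{\hat{\mu}^{r}_a - \mu_a} > \sqrt{\tfrac{\ln(T^3 N)}{T_r}}\right] \le 2\exp\!\left(-2 t^2 T_r\right) = 2\exp\!\left(-2\ln(T^3 N)\right) = \frac{2}{(T^3 N)^2}.
\end{equation*}

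Next I would take a union bound over the (at most) $N$ arms and the (at most) $\log_\lambda T \le T$ rounds satisfying $r \le \log_\lambda T$. This yields
\begin{equation*}
\Pr[\bar{\E_2}] \le N \cdot \log_\lambda T \cdot \frac{2}{(T^3 N)^2} = \frac{2 \log_\lambda T}{T^6 N} \le \frac{1}{T^3},
\end{equation*}
which is the desired bound (with ample slack, since $\lambda \ge 2$ makes $\log_\lambda T \le \log_2 T$).

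There is no real obstacle here; the only minor point worth flagging is that within a single round, the $T_r - T_{r-1}$ fresh pulls of arm $a$ are independent of the previous pulls (since Algorithm~\ref{alg:main} simply continues sampling $a$ whenever $a$ remains in $I_r$), so the $T_r$ samples averaged into $\hat{\mu}^r_a$ really are i.i.d.\ from $\D_a$ as Hoeffding's inequality requires. Once this is noted, the argument is a two-line calculation followed by a union bound.
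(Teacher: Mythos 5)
Your proof is correct and follows essentially the same route as the paper's: apply Hoeffding's inequality (Lemma~\ref{lem:hoeffding}) with $t = \sqrt{\ln(T^3N)/T_r}$ to get a per-(arm, round) failure probability of $\tfrac{2}{T^6N^2}$, then union bound over the $N$ arms and the at most $\log_\lambda T$ rounds. The only cosmetic difference is that the paper counts $(1+\log_\lambda T)$ rounds in the union bound, which does not affect the final bound $\Pr[\E_2] \ge 1 - 1/T^3$.
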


\begin{proof}
	By Hoeffding's inequality (Lemma~\ref{lem:hoeffding}), we have for any $a \in I$, for any $r \le \log_\lambda T$, it holds that
	\begin{eqnarray*}
		&& \Pr  \left[\abs{\hat{\mu}^{r}_a - \mu_a} > \sqrt{\frac{\ln (T^3 N)}{T_r}}\right] \nonumber \\ 
		&& \le  2 \exp\left(\frac{-2 \ln(T^3 N)}{T_r} \cdot T_r\right)\nonumber \\
		&& \le  \frac{2}{T^6 N^2} \,.
	\end{eqnarray*}
	By a union bound, we have
	\begin{eqnarray*}
		\Pr[\E_2] \ge 1 - N \cdot (1 + \log_\lambda T) \cdot \frac{2}{T^6 N^2} \ge 1 - \frac{1}{T^3}.
	\end{eqnarray*}
\end{proof}

\begin{lemma}
	\label{lem:star-survive}
	If $\E_2$ holds, then for any $r$ we have $\star \in I_r$.
\end{lemma}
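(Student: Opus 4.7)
The plan is to prove the statement by induction on the round index $r$, using the concentration event $\E_2$ to argue that $\star$ never looks sufficiently worse than the empirical leader to be pruned.

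The base case $r = 0$ is immediate since $I_0 = I$ contains every arm, in particular $\star$. For the inductive step, assume $\star \in I_r$; I would show $\star$ survives the culling on line~\ref{ln:pruning} and hence $\star \in I_{r+1}$. Under $\E_2$, for every surviving arm $a \in I_r$ we have
\begin{equation*}
\hat{\mu}^{r}_a \;\le\; \mu_a + \sqrt{\tfrac{\ln(T^3 N)}{T_r}} \;\le\; \mu_\star + \sqrt{\tfrac{\ln(T^3 N)}{T_r}},
\end{equation*}
and in particular taking the maximum over $a \in I_r$ gives $\hat{\mu}^{r}_{\max} \le \mu_\star + \sqrt{\ln(T^3 N)/T_r}$. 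Combining this with the lower-tail bound $\hat{\mu}^{r}_\star \ge \mu_\star - \sqrt{\ln(T^3 N)/T_r}$ (also guaranteed by $\E_2$) yields
\begin{equation*}
\hat{\mu}^{r}_{\max} - \hat{\mu}^{r}_\star \;\le\; 2\sqrt{\tfrac{\ln(T^3 N)}{T_r}},
\end{equation*}
so $\star$ meets the survival threshold in the elimination rule and therefore $\star \in I_{r+1}$.

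There is really no substantive obstacle here; the only mild wrinkle is reconciling the strict inequality ``$<$'' used in the pruning rule of Algorithm~\ref{alg:main} with the non-strict ``$\le$'' that my two-sided concentration argument produces. I would handle this either by observing that the event $\E_2$ can equivalently be stated with a strict inequality (Hoeffding's bound is unchanged at a single boundary value), or by absorbing the difference into the constant in front of the confidence radius; either way the argument is unaffected. The induction then covers all $r$ for which the while loop is executed, giving $\star \in I_r$ throughout the algorithm's lifetime on the good event $\E_2$.
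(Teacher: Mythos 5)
Your proof is correct and follows essentially the same route as the paper's: under $\E_2$, the two-sided concentration bounds give $\hat{\mu}^{r}_{\max} - \hat{\mu}^{r}_\star \le 2\sqrt{\ln(T^3N)/T_r}$, so $\star$ meets the survival criterion in each round (the paper leaves the induction over rounds implicit and simply asserts the strict inequality). Your explicit handling of the strict-versus-non-strict boundary is a minor refinement the paper glosses over, and either of your fixes is fine.
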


\begin{proof}
	By the definition of event $\E_2$, we have for any $b \in I$,
	\begin{eqnarray*}
		\hat{\mu}^r_b - \hat{\mu}_{\star}^r & < & (\mu_b - \mu_\star)  + 2 \sqrt{\frac{\ln(T^3 N)}{T_r}} \le 2 \sqrt{\frac{\ln(T^3 N)}{T_r}}\,.
	\end{eqnarray*}
	Therefore, $\star \in I_r$ based on the description of Algorithm~\ref{alg:main}.
\end{proof}

For a suboptimal arm $a$, define $r(a)$ to be the smallest value such that $T_{r(a)} > \frac{64 \ln(T^3 N)}{\Delta_a^2}$.  The next lemma shows that all suboptimal arms will be eliminated by before the $r(a)$-th round ends.

\begin{lemma}
	\label{lem:elimination}
	If $\E_2$ holds, then any arm $a \neq \star$ does not appear in any $I_r$ for $r > r(a)$.	
\end{lemma}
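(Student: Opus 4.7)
My plan is to show that in round $r(a)$ itself the suboptimal arm $a$ fails the retention inequality on Line~\ref{ln:pruning} of Algorithm~\ref{alg:main}, so that $a \notin I_{r(a)+1}$. Since Algorithm~\ref{alg:main} only ever shrinks the active set, a trivial induction on $r$ then yields $a \notin I_r$ for every $r > r(a)$.

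To carry this out, I first invoke Lemma~\ref{lem:star-survive} to conclude that $\star \in I_{r(a)}$, which immediately gives $\hat{\mu}^{r(a)}_{\max} \ge \hat{\mu}^{r(a)}_\star$. Then I apply the concentration guarantee in event $\E_2$ twice, once to $\star$ and once to $a$, to obtain
\[
\hat{\mu}^{r(a)}_{\max} - \hat{\mu}^{r(a)}_a \;\ge\; \hat{\mu}^{r(a)}_\star - \hat{\mu}^{r(a)}_a \;\ge\; \Delta_a - 2\sqrt{\frac{\ln(T^3 N)}{T_{r(a)}}}\,.
\]
By the definition of $r(a)$ we have $T_{r(a)} > 64 \ln(T^3 N)/\Delta_a^2$, which is exactly strong enough to ensure $\Delta_a > 8\sqrt{\ln(T^3 N)/T_{r(a)}}$; plugging this into the display above yields
\[
\hat{\mu}^{r(a)}_{\max} - \hat{\mu}^{r(a)}_a \;>\; 6\sqrt{\frac{\ln(T^3 N)}{T_{r(a)}}} \;>\; 2\sqrt{\frac{\ln(T^3 N)}{T_{r(a)}}}\,,
\]
which violates the condition for $a$ to be kept in $I_{r(a)+1}$.

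There is essentially no obstacle here — the argument is a direct application of the Hoeffding-type concentration encoded in $\E_2$ together with the numerical slack built into the definition of $r(a)$ (the factor $64$ is chosen precisely so that $\Delta_a$ dominates $4$ times the confidence radius at round $r(a)$). The only minor caveat is that $r(a)$ must be a round actually reached by the while-loop, i.e., $r(a) \le \log_\lambda T$; if no such $r$ exists within the loop's range, the statement is vacuous for $r > r(a)$ interpreted as $r > \log_\lambda T$.
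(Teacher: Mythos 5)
Your proposal is correct and follows essentially the same route as the paper's proof: apply the $\E_2$ concentration bounds to $\star$ and $a$ at round $r(a)$ (with $\hat{\mu}^{r(a)}_{\max} \ge \hat{\mu}^{r(a)}_\star$ via Lemma~\ref{lem:star-survive}, which the paper uses implicitly) and use the definition of $r(a)$ to show the pruning condition on Line~\ref{ln:pruning} eliminates $a$, the only cosmetic difference being your rearrangement of the numerical slack ($\Delta_a > 8\sqrt{\ln(T^3N)/T_{r(a)}}$ versus the paper's $\Delta_a - 4\sqrt{\ln(T^3N)/T_{r(a)}} \ge \Delta_a/2$) and your explicit mention of the monotone shrinkage of the active set.
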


\begin{proof}
	For any fixed arm $a \neq \star$, we consider the case that $a \in I_{r(a)}$, in which case the arm $a$ will be pulled for $T_{r(a)}$ times.  Abbreviating $r(a)$ as $r$, we have
	\begin{eqnarray*}
		 && \hat{\mu}^{r}_{\max} - \hat{\mu}^{r}_{a}  - 2 \sqrt{\frac{\ln(T^3 N)}{T_r}} \nonumber \\ 
		&&\ge \hat{\mu}^{r}_{\star} - \hat{\mu}^r_a - 2 \sqrt{\frac{\ln(T^3N)}{T_r}} \quad \text{(definition of $\hat{\mu}^{r}_{\max}$).}  \nonumber \\
		&&\ge  \mu_{\star} - \mu_{a} - 4\sqrt{\frac{\ln(T^3N)}{T_r}}  \quad \text{($\E_2$ holds)} \nonumber \\ 
		&&= \Delta_a - 4\sqrt{\frac{\ln(T^3 N)}{T_r}} \\
		&&\ge \Delta_a - 4 \cdot \frac{\Delta_a}{8}  \quad \text{(definition of $T_r$)} \\
		&&= \frac{\Delta_a}{2} > 0\ ,
	\end{eqnarray*} 
	which means that arm $a$ will be eliminated in the $r(a)$-th round.
\end{proof}

By Lemma~\ref{lem:elimination} and the fact that $T_{r+1} \le \lambda T_r$ for any $r$, the number of pulls on each suboptimal arm is bounded by 
\begin{eqnarray*}
	\max\{T_{r(a)}, \lambda \ln(T^3N)\} & \le & \frac{64 \lambda \ln(T^3 N)}{\Delta_a^2} + \lambda\ln(T^3N) \\ &\le& \frac{200 \lambda \ln(T N)}{\Delta_a^2}.
\end{eqnarray*}
Hence, the expected regret is bounded by 
\begin{equation}
	\sum_{a \neq \star} T_{r(a)} \Delta_a \le \sum_{a \neq \star} \frac{200 \lambda \ln(T N)}{\Delta_a} = O\left(\sum_{a \neq \star} \frac{\lambda \ln T}{\Delta_a} \right)\ ,
\end{equation}
where in the last equality, we have implicitly assumed that $T \ge N$, since each arm needs to be pulled at least once in order to identify the best arm.

\paragraph{Round Complexity.}  We next bound the number of batches. After the $r$-th round, the algorithm makes at least $\lambda^r$ pulls. Therefore, the number of rounds is upper bounded by $\log_\lambda T$ with certainty.  

On the other hand, if there is some round $r$ for which $\abs{I_r} = 1$, then we will pull this arm until the end of the time horizon.  By Lemma~\ref{lem:elimination}, all suboptimal arms will be pruned after $\max_{a \neq \star} r(a)$ rounds.  Therefore, if $\E_2$ holds, then the number of rounds can also be bounded by 
\begin{eqnarray*}
	&&\max\left\{\log_\lambda \frac{64\ln(T^3 N)}{(\Delta(I))^2} + 1 - \log_\lambda\ln(T^3N), 1\right\} \\ &&= 2\log_\lambda\frac{8}{\Delta(I)} + 1 \le 3 \log_\lambda\frac{8}{\Delta(I)}.\end{eqnarray*}

Therefore, the total number of rounds is also upper bounded by $3\log_\lambda \frac{8}{\Delta(I)}$ with probability $\left(1 - \frac{1}{T^3}\right)$.
\end{document}